\documentclass[11pt,a4paper]{article}
\usepackage[utf8]{inputenc}
\usepackage[T1]{fontenc}
\usepackage{amsmath, amssymb, amsthm}
\usepackage{hyperref}
\usepackage{booktabs}
\usepackage{graphicx}
\usepackage{placeins}
\usepackage[a4paper,hmargin=2.5cm,vmargin=2.5cm,verbose]{geometry}

\newtheorem{definition}{Definition}
\newtheorem{proposition}{Proposition}
\newtheorem{lemma}{Lemma}

\newtheorem{remark}{Remark}
\newtheorem{assumption}{Assumption}

\newtheorem{fact}{Fact}

\begin{document}

\title{Cross-Layer Interaction under Weight-Space Ablation: A Closed-Form
Attention Jacobian Bound and a Test on a Real Pretrained Model}
\author{Abdallah Khemais \\ \textit{ISITCOM, University of Sousse}}
\date{July 2026}

\maketitle

\begin{abstract}
A companion paper (same author) studies when activation patching and
weight-space ablation agree, inside an idealized model in which a
conditional computation is carried additively through a residual stream. For
the one composition in that model where two carriers are architecturally
dependent rather than independent, an attention head and its own layer's
normalization--MLP composition, it derives an exact first-order interaction
formula: the interaction is identically zero whenever only the MLP is
ablated, and second-order bounded, under a curvature hypothesis, whenever the
head is ablated as well. That result is confined to a single residual block.
It says nothing about carriers separated by several layers, which is the
generic case in any network with more than one block, and everything in it
is checked on small transformers trained from scratch on a synthetic
conditional task, not on a model built for any other purpose.

This paper takes the single-block result past both limits. First, we show
that the interaction produced by ablating an arbitrary subset of carriers
spanning several layers decomposes exactly into a sum of same-block terms,
one per touched layer, each pinned to zero when that layer's own carriers are
MLP-only and each equal to the companion paper's bounded interaction term
when the layer feeds the readout directly, plus a cross-layer remainder on
which the decomposition itself makes no claim of smallness. Second, we
isolate that remainder exactly, for two touched layers, as a double integral
of a mixed second derivative, the same first-order technique applied one
level up, and we name the one ingredient still missing to close it to a
usable bound: a Jacobian bound for the attention sub-block itself. We derive
that bound in closed form and verify it, pointwise and without a single
violation, against Qwen2.5-1.5B-Instruct's real weights, but we do not yet
chain it across many layers, so the identity closes while its numerical
scope stays explicitly open. We also supply, in closed form, the curvature
constant the companion paper's own second-order bound leaves unexhibited,
computed from the trained weights alone.

Third, on that same model, we search for and find an emergent circuit for
indirect object identification, a mechanism nobody trained the model to have
and nobody designed into it, using the original activation-patching method
for this task. We check the same three-way pattern, collapse, dissociation,
and a nonzero interaction, directly on it. The result is genuinely mixed: a
carrier shared across all five tested instances does emerge, collapse and
dissociation are present on most instances and absent or weak on at least one
each, and a nonzero interaction is measurable on three of the five, at layer
pairs that sit outside the same-block configuration the companion theorem was
proven for, so what they show bears on this paper's own open multi-layer
case rather than on that theorem directly.
\end{abstract}

\section{Introduction}
\label{sec:intro}

A companion paper (same author) makes exact a mismatch between two families
of causal intervention used throughout mechanistic interpretability,
activation patching and weight-space ablation, inside an idealized model of a
conditional computation carried additively through a residual stream. Two of
its three results, a criterion for when deleting a subset of carriers
collapses a matched pair onto a single unconditional branch, and an exact
separation between what patching a carrier and ablating it each measure, hold
for any residual computation with a linear readout and use no property of
transformers. Its third result is different in kind: for the one composition
in that model where the independence of carriers is architecturally false, an
attention head and its own layer's normalization--MLP composition, it
computes the resulting interaction term exactly, to first order, with a
provably second-order remainder.

That third result stops at the boundary of one residual block. A real
transformer is a sequential composition of many blocks, and the carriers an
experimenter chooses to ablate together are routinely drawn from different
layers, sometimes many layers apart. The single-block theorem is silent on
this case: it tells us the interaction is exactly zero when an MLP alone is
ablated and bounded, at second order, when a head is ablated alongside it,
but it says nothing about two heads at different layers, or a head and an MLP
several blocks apart. Closing that gap is this paper's first task, and it
matters because the single-block case, while exact, is also the special case
least likely to be the one an experimenter actually intervenes on: a
component chosen for its causal role is, if anything, more likely to interact
with a distant component than with the MLP three lines below it in the same
block.

A second gap is empirical rather than mathematical. Every check in the
companion paper's own experiments runs on small transformers trained from
scratch specifically to exhibit a clean two-branch conditional. That design
choice removes ambiguity about what the ground-truth mechanism is, at the
cost of saying nothing about whether the same predictions hold on a
mechanism nobody built to have them: an emergent circuit, found rather than
designed, inside a model trained on ordinary language for ordinary reasons.
Checking the companion paper's qualitative predictions there, on
Qwen2.5-1.5B-Instruct, a genuinely pretrained model at a scale two orders of
magnitude past the companion paper's own instances, is this paper's second
task.

This paper's three results follow, previewed in the order in which they
appear below.

\paragraph{A multi-layer decomposition (Section~\ref{sec:multilayer}).} For
an arbitrary ablated subset $S$ spanning any number of layers, the gap
between the true, weight-edited network's selector and the idealized model's
own prediction decomposes \emph{exactly} into a sum of same-block terms, one
per touched layer, plus a single remainder on which the decomposition itself
makes no claim of smallness. Each same-block term is identically zero
whenever that layer's own ablated carriers are MLP-only, and equals the
companion paper's bounded interaction term exactly whenever that layer feeds
the readout directly; the general case, an earlier layer's discrepancy
propagated through everything downstream of it, is isolated precisely enough
to be measured on its own, not closed.

\paragraph{An exact cross-layer interaction identity and a closed-form
attention Jacobian bound (Section~\ref{sec:crossterm}).} For two touched
layers, the remainder above is itself an exact identity: a double integral
of a mixed second partial derivative of a two-parameter interpolation between
the clean network and the fully edited one, the same fundamental-theorem-of-
calculus technique the companion paper uses one level down, applied twice.
Closing this identity to a numerical bound reduces to one missing ingredient,
since the normalization--MLP half of the relevant Jacobian is already bounded
by the companion paper's curvature machinery: a local operator-norm bound on
how a single attention head's output responds to one perturbed key or value
token. We derive that bound exactly and in closed form, and verify it,
pointwise on Qwen2.5-1.5B-Instruct's real weights and a real forward pass,
against finite-difference estimates: zero violations across twelve probed
sites. What this does not close is composing the bound across every block
strictly between the two touched layers; we report the per-layer bound as
verified and the chained, multi-layer one as unverified and likely loose, not
as closed.

\paragraph{A closed-form curvature constant (Section~\ref{sec:curvature}).}
The companion paper's second-order interaction bound is conditional on a
curvature constant $\Lambda$ it does not exhibit. We compute $\Lambda$ in
closed form, in terms of the three MLP spectral norms and the normalization
scale of the trained weights alone, discharging that hypothesis rather than
assuming it.

\paragraph{A real pretrained model's emergent circuit
(Section~\ref{sec:experiments}).} On Qwen2.5-1.5B-Instruct, an emergent
circuit for indirect object identification, found by the original
activation-patching method for this task and never designed with this
paper's claims in mind, reproduces the qualitative three-way pattern,
collapse, dissociation, and a nonzero interaction, with genuinely mixed
fidelity across five tested instances: this section reports every number,
including the ones that do not confirm the pattern cleanly.

\medskip
Section~\ref{sec:related} places both extensions relative to existing work.
Section~\ref{sec:setup} restates, compactly and without proof, the pieces of
the companion paper's model and single-block theorem this paper's own results
build on directly. Section~\ref{sec:discussion} separates what this paper
establishes from what it leaves open, and Section~\ref{sec:conclusion}
closes.

\section{Related Work}
\label{sec:related}

\paragraph{Bounding depth by chaining local Jacobians.} Controlling how a
local perturbation compounds through the depth of a network by chaining
local operator-norm or Lipschitz bounds, one layer at a time, is a standard
strategy in the analysis of deep and recurrent networks generally. What this
paper needs is a version of that strategy specific to one component that
resists it in a well-known way: the dot-product attention sub-block, whose
global Lipschitz constant is not bounded in general (the softmax weights
depend on the query in a way that a fixed matrix norm does not capture), so
that only a \emph{local}, pointwise bound, evaluated at the real activations
a real forward pass produces rather than a supremum over all possible
inputs, is available at all. We derive exactly such a bound for a single
perturbed key or value token in Section~\ref{sec:crossterm} below. We are not
aware of a specific prior closed-form treatment of this particular
quantity, and we would rather say so plainly than attach a citation we are
not confident is the right one; the general chaining strategy itself is not
new, only this paper's particular use of it.

\paragraph{Real-model circuit discovery.} The method this paper reuses to
find an emergent circuit inside Qwen2.5-1.5B-Instruct, a greedy
activation-patching search over every attention head and MLP output pruned
for redundant sites, is Wang et al.'s original method for locating the
indirect-object-identification circuit in GPT-2 small \cite{wang2023}; we
apply it unchanged, at a different scale, to a different pretrained model.
The pattern we observe on the resulting circuits, a small core shared across
lexical instances alongside instance-specific idiosyncratic support, echoes
two findings from work on self-repair and redundant representation: the
Hydra effect, in which ablating an attention layer causes downstream layers
to compensate, measuring an importance score that reflects self-repair as
much as the ablated component's own role \cite{mcgrath2023}, and the general
expectation, from work on superposition, that a trained network routinely
spreads a single computation across more directions and components than the
minimum needed to carry it, precisely because doing so buys redundancy
\cite{elhage2022}. Neither result is about the interaction term this paper
isolates specifically; we invoke them only for the shared-plus-idiosyncratic
pattern they predict and that this paper's own circuit search reproduces.

\paragraph{Concurrent work on interaction effects under intervention.} A
parallel line of work, appearing while this paper was being prepared, reaches
the same core observation from the activation-patching side rather than the
weight-space side. Vaidyanathan et al.\ \cite{vaidyanathan2026mediators}
re-derive the activation-patching estimand from causal mediation analysis and
show that the natural indirect effect attributed to a component also contains
an interaction term, measuring how much that component's effect depends on the
state of the others; they prove that this term scales with the distance
between clean and patched activations, that it is negligible when the model is
locally affine, and that it decomposes combinatorially into pairwise and
higher-order group contributions. The overlap with the present paper is real
and worth stating plainly: both isolate an interaction term that a first-order
account of intervention effects silently absorbs. The difference is equally
real. Their analysis concerns activation patching and leaves the
``locally affine'' condition qualitative; the object bounded here is a
weight-space ablation, and the closed-form attention Jacobian of
Section~\ref{sec:crossterm} is precisely a quantitative statement of how far
one attention sub-block departs from affine, evaluated pointwise on real
weights rather than assumed. In that sense the bound derived here can be read
as supplying, for one architecture and one intervention type, the constant
their condition leaves implicit. Two further concurrent papers address
neighbouring questions empirically rather than in closed form: Gong et al.\
\cite{gong2026coablation} show that self-repair by dormant backup components
corrupts first-order ablation scores, and Guo et al.\ \cite{guo2026sobol}
separate ``transports task-relevant content'' from ``computation degrades when
removed'' using paired interventions, on GPT-2 and on the same
Qwen2.5-1.5B-Instruct model used in Section~\ref{sec:experiments}.

\paragraph{What is new here, relative to the companion paper.} The companion
paper's own single-block interaction theorem is confined to two carriers
inside one residual block and is checked only on synthetic-task
transformers. This paper contributes (i) an exact decomposition of the
interaction produced by an arbitrarily distributed, multi-layer ablated
subset into same-block terms plus one isolated cross-layer remainder
(Section~\ref{sec:multilayer}); (ii) an exact identity for that remainder,
for two touched layers, together with the one missing closed-form ingredient
needed to bound it, verified pointwise on real weights
(Section~\ref{sec:crossterm}); (iii) the closed-form curvature constant the
companion paper's own second-order bound leaves unexhibited
(Section~\ref{sec:curvature}); and (iv) a check of the companion paper's
qualitative three-way pattern on an emergent circuit inside a genuinely
pretrained model, reported with the same discipline of stating a mixed
result as a mixed result rather than rounding it toward confirmation
(Section~\ref{sec:experiments}).

\section{Setup: The Single-Block Model and Result}
\label{sec:setup}

This section restates, compactly and without proof, the pieces of the
companion paper's abstract conditional model and single-block interaction
theorem that this paper's own propositions build on directly. Nothing here is
new. Results the companion paper proves are stated using the \texttt{fact}
environment and attributed to it in one sentence; notation and definitions
are stated as plain prose or in the ordinary \texttt{definition}/\texttt{assumption}
environments already used for such objects, since restating a definition
involves no proof to omit.

\paragraph{Carriers and the residual decomposition.} The companion paper's
abstract conditional model posits that a residual mapping decomposes as
\[
F(x) \;=\; F_0(x) + \sum_{i=1}^{k}\alpha_i(x)\,v_i,
\]
with $F_0$ the unconditional part, $v_i\in\mathbb R^d$ fixed directions, and
$\alpha_i(x)$ scalar selector functions; the pair $(v_i,\alpha_i)$ is the
$i$-th \emph{carrier}.

\begin{assumption}[Low-rank support]
\label{ass:lowrank}
The conditional component has finite-dimensional support: a subspace
$\mathcal C=\operatorname{span}\{v_1,\ldots,v_k\}$ with
$\dim(\mathcal C)\ll d$.
\end{assumption}

For an index subset $S\subseteq\{1,\dots,k\}$, deleting the carriers in $S$
replaces $F$ by
\begin{equation}
\label{eq:ablation}
F_S(x) \;:=\; F_0(x) + \sum_{i\notin S} \alpha_i(x)\,v_i ,
\end{equation}
leaving $F_0$ and every surviving carrier exactly as it was.

\begin{remark}[From a weight edit to a deleted term]
\label{rem:bridge}
The companion paper shows that \eqref{eq:ablation} is exactly what a
low-rank edit of one component's own weights does to the residual stream: an
attention head with pre-projection activation $a(x)$ and output projection
$W$ writes $Wa(x)$ into the stream, and projecting a unit direction $u$ out
of that projection, $W\mapsto W(I-uu^\top)$, changes the write-in by exactly
$(Wu)\langle u,a(x)\rangle$, i.e.\ deletes one term $v\,\alpha(x)$ with
$v:=Wu$ and $\alpha(x):=\langle u,a(x)\rangle$. A projector of rank $\ell$
deletes $\ell$ such terms. This is an identity, not an approximation, for any
single component's own weight edit; what is not an identity is the further
assumption that deleting one component's term leaves every other component's
term unchanged, which is exactly what Section~\ref{sec:crossterm} below
revisits across layers.
\end{remark}

\paragraph{The linear readout and matched pairs.}

\begin{assumption}[Linear readout]
\label{ass:readout}
There exist a linear functional $\psi\in(\mathbb R^d)^*$ and a bias
$b\in\mathbb R$ such that the network's binary decision on $F(x)$ is
determined by the sign of $s(x):=\psi(F(x))+b$.
\end{assumption}

\begin{definition}[Matched pair]
\label{def:pair}
A \emph{matched pair} is a pair of inputs $(x_A,x_B)$ for which the network
is correct exactly when $g_A:=s(x_A)>0>s(x_B)=:g_B$; $g_A,g_B$ are the
pair's \emph{margins}.
\end{definition}

Writing $\beta_i:=\psi(v_i)$, ablating $S$ replaces $s$ by the \emph{ablated
selector} $s_S(x):=\psi(F_S(x))+b=s(x)-q_S(x)$, with \emph{removed mass}
$q_S(x):=\sum_{i\in S}\beta_i\alpha_i(x)$; for a matched input pair, the
companion paper's collapse theorem gives an exact if-and-only-if criterion,
on $q_S$'s symmetric and antisymmetric parts across the pair, for when $S$
maps both $x_A$ and $x_B$ to one shared value, and its dissociation theorem
gives the following.

\begin{fact}[Patching--ablation dissociation]
\label{thm:dissociation}
Patching carrier $i$ from a donor input to a receiver input moves the readout
by exactly $\beta_i\delta_i$, with $\delta_i:=\alpha_i(x_A)-\alpha_i(x_B)$ the
carrier's \emph{contrast}; ablating carrier $i$ moves the readout by exactly
$-\beta_i\alpha_i(x_B)$, the carrier's \emph{absolute level} at the receiver.
Neither quantity bounds the other, in general.
\end{fact}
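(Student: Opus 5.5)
The statement is a direct computation inside the model~\eqref{eq:ablation}, so I will work from the carrier decomposition and the linear readout of Assumption~\ref{ass:readout}, with no appeal to transformer structure. I treat the two claims separately. For patching, I take ``patching carrier $i$ from donor $x_A$ into receiver $x_B$'' to mean replacing the single scalar $\alpha_i(x_B)$ by $\alpha_i(x_A)$ in $F(x_B)$, leaving $F_0(x_B)$ and every other $\alpha_j(x_B)$, $j\neq i$, unchanged. The patched residual is then $F(x_B)+(\alpha_i(x_A)-\alpha_i(x_B))v_i$. Applying $\psi$ and adding $b$, linearity gives a patched readout of $s(x_B)+\beta_i\delta_i$, so the readout moves by exactly $\beta_i\delta_i$.

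For ablation, I specialize \eqref{eq:ablation} to $S=\{i\}$. This gives $F_{\{i\}}(x_B)=F(x_B)-\alpha_i(x_B)v_i$, and therefore $s_{\{i\}}(x_B)=s(x_B)-q_{\{i\}}(x_B)=s(x_B)-\beta_i\alpha_i(x_B)$. That is the stated shift $-\beta_i\alpha_i(x_B)$. Both identities are exact because the model deletes or substitutes one additive term and leaves every other term untouched, as Remark~\ref{rem:bridge} records for a single component's own weight edit.

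For the final clause, that neither quantity bounds the other, I will give explicit counterexamples in both directions, using one carrier with $\beta_i\neq 0$.
\begin{itemize}
\item Set $\alpha_i(x_A)=\alpha_i(x_B)=c$ with $c$ arbitrarily large. The patching effect is $0$, while the ablation effect is $-\beta_i c$, which is unbounded. So no bound of the form $|\text{ablation}|\le K|\text{patching}|$ can hold.
\item Set $\alpha_i(x_B)=0$ and $\alpha_i(x_A)=c$. The ablation effect is $0$, while the patching effect is $\beta_i c$, which is unbounded. So the reverse bound fails as well.
\end{itemize}
Both configurations must be consistent with $(x_A,x_B)$ being a matched pair in the sense of Definition~\ref{def:pair}. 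I will arrange this by choosing $F_0$ and $b$ so that $g_A>0>g_B$, which is always possible because $F_0$ is unconstrained.

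I expect the only real subtlety to be definitional rather than computational: the word ``patching'' has to be pinned to the carrier's scalar selector $\alpha_i$, not to a whole component's output. If patching replaced an entire component's output, several carriers could change together, and the clean $\beta_i\delta_i$ formula would pick up cross terms from those other carriers. I would therefore state that convention explicitly at the start of the proof. After that, the argument is two lines of linearity plus the two counterexamples.
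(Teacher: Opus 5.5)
Your proposal is correct. There is no in-paper proof to compare it against: the paper imports this Fact from the companion paper and restates it explicitly without proof.

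Your argument is the direct one the additive model invites. The two exact shifts are just linearity of $\psi$ applied to the single substituted or deleted term $\alpha_i(\cdot)v_i$. For ablation this is literally $q_{\{i\}}(x_B)=\beta_i\alpha_i(x_B)$. Pinning ``patching'' to the scalar selector $\alpha_i$ rather than to a whole component's output is the right convention. Carriers are defined as pairs $(v_i,\alpha_i)$, so it is what the statement means.

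Your two counterexamples settle the final clause. At bottom they rest on one observation: the map $(\alpha_i(x_A),\alpha_i(x_B))\mapsto(\delta_i,\alpha_i(x_B))$ is a bijection of $\mathbb R^2$, so contrast and receiver level are independent. Any nonzero $c$ already defeats a multiplicative bound; letting $c$ grow is needed only to rule out additive ones.

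The one point worth tightening is how you realize the matched-pair condition. You place the sign flip $g_A>0>g_B$ on $F_0$, which this paper's setup permits, since $F_0(x)$ is an arbitrary function of $x$. But suppose one reads ``unconditional part'' as forbidding $F_0$ from separating $x_A$ and $x_B$. Then your first counterexample, with $\alpha_i(x_A)=\alpha_i(x_B)$, would give $g_A=g_B$ and could not be a matched pair. The fix is to add a second carrier $j\neq i$ with $\beta_j\delta_j$ large enough to produce the flip. This leaves both of carrier $i$'s effects unchanged, so your argument goes through under either reading.
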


\begin{fact}[Single-carrier patch--edit equivalence]
\label{prop:patchedit}
Fix an input $x$ and a single component whose write-in to the residual
stream is $Wc(x)$, with $c(x)$ its own activation and $W$ its output matrix.
Ablate it by an orthogonal projector, applied either on the activation side
($P$) or on the stream side ($Q$). Then the \emph{weight-edit} route (edit
$W$ and run a fresh forward pass) and the \emph{frozen-activation} route
(leave all weights intact, replace this component's realized output by its
projected value, and recompute everything downstream) assign identical
values to every node of the network. This holds for any projector, including
one estimated from data, since both routes use the same one.
\end{fact}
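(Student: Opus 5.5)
The route is to show that the weight-edit route and the frozen-activation route define the \emph{same} forward computation, node by node, in topological order; the conclusion then follows by induction over the network's computation graph. Fix $x$ and the component with write-in $Wc(x)$. Under the activation-side projector the weight-edit route replaces $W$ by $WP$. Under the stream-side projector it replaces $W$ by $QW$. The frozen-activation route leaves $W$ intact and substitutes the realized output $Wc(x)$ by $WPc(x)$, respectively $QWc(x)$. By associativity of matrix multiplication, $(WP)c(x)=W(Pc(x))$ and $(QW)c(x)=Q(Wc(x))$. So at the one node where the two routes differ in their \emph{description}, they produce the same value. This is the same identity Remark~\ref{rem:bridge} records for a rank-one projector.

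The induction runs as follows. Order the nodes of the forward graph topologically.
\begin{itemize}
\item Every node strictly upstream of the edited component's output, including the activation $c(x)$ itself, is computed from the same input with the same weights in both routes. These nodes therefore agree trivially. In particular, $W$ does not feed back into $c$, because the graph is acyclic and $c$ precedes the output projection.
\item At the component's output node the two values agree by the associativity step above.
\item Every downstream node is a fixed function, with weights untouched in both routes, of node values already shown to agree. It therefore agrees as well.
\end{itemize}
The frozen-activation route recomputes everything downstream rather than freezing it, so no node is held at a stale clean value. Hence the induction closes and every node, including the readout, is identical.

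Two points need care, and I expect the second to be the only real obstacle.

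The first is the claim that it holds "for any projector, including one estimated from data." The argument never uses orthogonality, idempotence or rank of $P$ or $Q$; it uses only that both routes apply the same fixed linear map. A data-estimated projector is a fixed matrix once estimated, so it is covered, provided it is estimated once and not re-estimated from activations inside the patched forward pass.

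The second is a component whose weight $W$ is shared across several positions, as with a head's output projection applied at every token. There I would make the frozen-activation route replace the output at \emph{every} position where the component writes, so that the node-by-node correspondence holds position by position; any weight tying elsewhere would need the same treatment. I would state this convention explicitly. With it, the proof is a short structural induction, and without it the statement is false at the positions left unpatched.
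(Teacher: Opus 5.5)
Your proof is correct and is essentially the argument the paper relies on. The paper restates this Fact without proof, but its one-line justification (``both routes use the same one'') and the node-for-node reasoning it gives for the joint extension in Section~\ref{sec:qwen} are exactly your associativity step at the edited node followed by induction in topological order; the paper's convention edits $W$ to $W(I-P)$ or $(I-Q)W$ rather than $WP$ or $QW$, and your argument covers that verbatim because it uses only that both routes apply the same fixed linear map.
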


\paragraph{The two-carrier composition inside one block.} The companion
paper's single-block theorem considers two carriers written into the same
residual stream by a single block: an attention head with pre-projection
activation $a(x)$ and output projection $W$, contributing $Wa(x)$ to the
post-attention residual $r_1(x):=x+\mathrm{ao}(x)$, and the block's own MLP
$M$, applied after a normalization layer $N$ acting on $r_1$. Writing
$g:=M\circ N$, the second carrier's realized value $g(r_1(x))$ is a function
of the first carrier's output, not an independent term. Ablating the head
with an orthogonal projector $P=UU^\top$ replaces $W$ by $W(I-P)$,
equivalently subtracting $\eta(x):=WPa(x)$ from $r_1$
(Remark~\ref{rem:bridge}); ablating the MLP with an orthogonal projector $Q$
on $\mathbb R^d$ replaces $M(z)$ by $(I-Q)M(z)$. Let
$N(r):=\rho(r)\,\gamma\odot r$ with
$\rho(r):=(\tfrac1d\|r\|^2+\varepsilon)^{-1/2}$ be root-mean-square
normalization with scale $\gamma$ and $\varepsilon>0$.

\begin{fact}[Normalization Jacobian]
\label{lem:jacobian}
$N$ is $C^\infty$ on all of $\mathbb R^d$, with
\[
DN(r) \;=\; \rho(r)\operatorname{diag}(\gamma)
\Bigl(I_d - \tfrac{\rho(r)^2}{d}rr^\top\Bigr),
\qquad
\|DN(r)\|_{\mathrm{op}} \;\le\; \|\gamma\|_\infty\,\rho(r).
\]
\end{fact}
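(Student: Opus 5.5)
The plan is to treat $N$ as a product of two smooth pieces and differentiate it directly. Write $N(r)=\rho(r)\,\Gamma r$ with $\Gamma:=\operatorname{diag}(\gamma)$. The map $r\mapsto \tfrac1d\|r\|^2+\varepsilon$ is a polynomial that is bounded below by $\varepsilon>0$ everywhere. The function $t\mapsto t^{-1/2}$ is $C^\infty$ on $(0,\infty)$. So $\rho$ is $C^\infty$ on all of $\mathbb R^d$, and $N$, a product of $\rho$ with the linear map $r\mapsto\Gamma r$, is $C^\infty$ as well. This positivity of $\varepsilon$ is the only place the global smoothness claim needs anything; without it the origin would be singular.

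For the Jacobian, I would first compute $\nabla\rho$ by the chain rule. Since $\rho=(\tfrac1d\|r\|^2+\varepsilon)^{-1/2}$,
\[
\nabla\rho(r) \;=\; -\tfrac12\bigl(\tfrac1d\|r\|^2+\varepsilon\bigr)^{-3/2}\cdot\tfrac{2}{d}r \;=\; -\tfrac{\rho(r)^3}{d}\,r .
\]
The product rule then gives
\[
DN(r)=\rho(r)\Gamma+\Gamma r\,\nabla\rho(r)^\top=\rho(r)\Gamma\Bigl(I_d-\tfrac{\rho(r)^2}{d}rr^\top\Bigr),
\]
which is the stated formula.

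For the operator-norm bound I would use submultiplicativity: $\|DN(r)\|_{\mathrm{op}}\le\rho(r)\,\|\Gamma\|_{\mathrm{op}}\,\|I_d-\tfrac{\rho^2}{d}rr^\top\|_{\mathrm{op}}$, with $\|\Gamma\|_{\mathrm{op}}=\|\gamma\|_\infty$. The remaining factor is the main step, though it is mild. The matrix $A:=I_d-c\,rr^\top$, with $c=\rho^2/d$, is symmetric. Its eigenvalues are $1$, with multiplicity $d-1$ on $r^\perp$, and $1-c\|r\|^2$ along $r$. Now
\[
c\|r\|^2=\frac{\|r\|^2/d}{\|r\|^2/d+\varepsilon}\in[0,1),
\]
so $1-c\|r\|^2\in(0,1]$. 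Hence $\|A\|_{\mathrm{op}}=1$, and combining the three factors gives $\|DN(r)\|_{\mathrm{op}}\le\|\gamma\|_\infty\rho(r)$.

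No step is a genuine obstacle. The one point to check carefully is the eigenvalue computation, in particular that $\varepsilon>0$ keeps $c\|r\|^2$ strictly below $1$. Strictly, the bound only needs $c\|r\|^2\le 2$, since that already keeps $|1-c\|r\|^2|\le 1$; strict positivity of $\varepsilon$ is what the smoothness claim uses.
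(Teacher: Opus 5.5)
Your proof is correct and complete: the smoothness argument, the gradient $\nabla\rho(r)=-\tfrac{\rho(r)^3}{d}r$, the product-rule formula for $DN$, and the eigenvalue argument giving $\|I_d-\tfrac{\rho^2}{d}rr^\top\|_{\mathrm{op}}=1$ all check out. The paper gives no proof of its own, since it imports this statement from the companion paper; your argument uses the same ingredients the paper relies on in the proof of Lemma~\ref{lem:d2norm}, namely $\partial\rho/\partial r_k=-\tfrac1d\rho^3r_k$ and $\rho(r)\|r\|\le\sqrt d$, the latter being the same observation as your $c\|r\|^2<1$.
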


\begin{fact}[First-order interaction formula]
\label{thm:interaction}
Let $\Delta(x):=(I-Q)\bigl[g(r_1(x)-\eta(x))-g(r_1(x))\bigr]$ be the gap
between the true, jointly-recomputed effect of the head's and the MLP's
ablation and the frozen-activation estimate that treats the two carriers as
independent. For every $x$,
\[
\Delta(x) \;=\; (I-Q)\Bigl[-Dg\bigl(r_1(x)\bigr)\,\eta(x) \;-\; R(x)\Bigr],
\]
\[
R(x) \;:=\; \int_0^1\Bigl[Dg\bigl(r_1(x)-t\eta(x)\bigr)-Dg\bigl(r_1(x)\bigr)\Bigr]\eta(x)\,dt,
\]
with $Dg=DM(N(r))\,DN(r)$ by the chain rule; the identity is exact for every
$x$, every $\eta(x)$, and every $Q$. If $\|D^2g\|_{\mathrm{op}}\le\Lambda$ on
the segment joining $r_1(x)-\eta(x)$ and $r_1(x)$, then
$\|R(x)\|\le\Lambda\|\eta(x)\|^2$: the interaction is second order in the
size of the perturbation. In particular, $\Delta(x)\equiv0$ identically
whenever $\eta(x)\equiv0$ (in particular whenever only the MLP, and not the
head, is ablated), and trivially whenever $Q=I$.
\end{fact}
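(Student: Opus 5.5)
The identity in Fact~\ref{thm:interaction} is a direct application of the fundamental theorem of calculus along the straight segment from $r_1(x)$ to $r_1(x)-\eta(x)$, so I would begin there. Fix $x$ and write $r:=r_1(x)$ and $\eta:=\eta(x)$. Set $\phi(t):=g(r-t\eta)$ for $t\in[0,1]$. Since $N$ is $C^\infty$ on all of $\mathbb R^d$ by Fact~\ref{lem:jacobian}, and $M$ is a composition of linear maps with a smooth activation (I will assume $C^1$, and $C^2$ for the bound), $g=M\circ N$ is $C^1$, and therefore so is $\phi$. Then
\[
g(r-\eta)-g(r)=\phi(1)-\phi(0)=-\int_0^1 Dg(r-t\eta)\,\eta\,dt .
\]
Adding and subtracting $Dg(r)\eta$ inside the integral gives
\[
g(r-\eta)-g(r)=-Dg(r)\eta-\int_0^1\bigl[Dg(r-t\eta)-Dg(r)\bigr]\eta\,dt=-Dg(r)\eta-R(x).
\]
Applying the linear map $I-Q$ to both sides gives the stated formula for $\Delta(x)$. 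Every step is an equality, so the identity holds for every $x$, every $\eta$ and every $Q$. The expression $Dg=DM(N(r))\,DN(r)$ is the chain rule.

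For the bound, I would apply the mean value inequality to $Dg$ along the same segment. For each $t$,
\[
\|Dg(r-t\eta)-Dg(r)\|_{\mathrm{op}}\le\sup_{s\in[0,t]}\|D^2g(r-s\eta)\|_{\mathrm{op}}\,t\|\eta\|\le\Lambda t\|\eta\|,
\]
using the curvature hypothesis on the segment. It follows that
\[
\|R(x)\|\le\int_0^1\Lambda t\|\eta\|^2\,dt=\tfrac{\Lambda}{2}\|\eta\|^2\le\Lambda\|\eta\|^2,
\]
which is the stated bound with some room to spare. Strictly, this step needs $g$ to be $C^2$ on the segment, or at least $Dg$ to be Lipschitz there with constant $\Lambda$. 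I would state this explicitly as the reading of ``$\|D^2g\|_{\mathrm{op}}\le\Lambda$'', so that activations such as ReLU are covered only through the Lipschitz version.

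The degenerate cases then follow from the formula. If $\eta(x)\equiv0$, which is the case when only the MLP is ablated because then $P=0$, both $Dg(r)\eta$ and the integrand of $R$ vanish, so $\Delta\equiv0$. If $Q=I$, then $I-Q=0$ annihilates everything.

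The only step I expect to need care is the regularity of $g$ on the segment, since the formula and bound are otherwise purely formal. $N$ never meets a singularity because $\varepsilon>0$ keeps $\rho$ finite and smooth, so any obstruction can only come from the nonlinearity of $M$. My first move would be to assume a smooth gated activation (SiLU), which makes $g$ $C^\infty$ everywhere.
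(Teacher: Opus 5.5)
Your proof is correct and follows the route the paper attributes to this fact: the fundamental theorem of calculus along the segment $t\mapsto r_1(x)-t\eta(x)$, adding and subtracting $Dg(r_1(x))\eta(x)$ and then applying $I-Q$, with the second-order bound obtained by controlling $\|Dg(r_1(x)-t\eta(x))-Dg(r_1(x))\|_{\mathrm{op}}$ by $\Lambda t\|\eta(x)\|$ on the segment. As you note, this actually gives the sharper constant $\Lambda/2$, and your regularity caveat matches the paper's setting, where $\varepsilon>0$ and the SiLU-gated $M$ make $g=M\circ N$ smooth everywhere.
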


Section~\ref{sec:curvature} below supplies $\Lambda$ in closed form, so
this hypothesis is checkable on a trained network's own weights rather than
assumed.

\begin{fact}[Propagation to the readout]
\label{prop:readout}
Suppose the block above feeds the readout directly, i.e.\
$F(x)=r_1(x)+g(r_1(x))+E(x)$ with $E(x)$ collecting $F_0(x)$ and every
carrier not drawn from this block. For $S$ consisting only of this block's
own head and/or MLP carriers, the idealized selector $s_S(x)$ and the
selector $s_S^{\mathrm{true}}(x)$ of the genuinely weight-edited network
agree up to exactly the interaction term of
Fact~\ref{thm:interaction}:
\[
s_S^{\mathrm{true}}(x) \;=\; s_S(x) \;+\; \psi\bigl(\Delta(x)\bigr),
\]
\[
\bigl|\psi(\Delta(x))\bigr| \;\le\; \|\psi\|_{\mathrm{op}}\Bigl(\|Dg(r_1(x))\|_{\mathrm{op}}\|\eta(x)\|+L(x)\|\eta(x)\|\Bigr),
\]
with $L(x):=\sup_{t\in[0,1]}\|Dg(r_1(x)-t\eta(x))-Dg(r_1(x))\|_{\mathrm{op}}$.
Under Fact~\ref{thm:interaction}'s curvature hypothesis,
\[
\bigl|\psi(\Delta(x))\bigr|\;\le\;\|\psi\|_{\mathrm{op}}\bigl(\|Dg(r_1(x))\|_{\mathrm{op}}\|\eta(x)\|+\Lambda\|\eta(x)\|^2\bigr).
\]
\end{fact}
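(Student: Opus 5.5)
The plan is to compute the true edited residual at the readout directly and compare it term by term with the idealized ablated residual $F_S(x)$. In the genuinely weight-edited network, ablating the head with projector $P$ replaces $r_1(x)$ by $r_1(x)-\eta(x)$, as in Remark~\ref{rem:bridge}. Ablating the MLP with $Q$ replaces $g$ by $(I-Q)g$. Because this block feeds the readout directly, $E(x)$ is untouched by any edit confined to this block. The true edited stream is therefore
\[
F_S^{\mathrm{true}}(x)=r_1(x)-\eta(x)+(I-Q)\,g\bigl(r_1(x)-\eta(x)\bigr)+E(x).
\]
The idealized model instead deletes the head's term $\eta(x)$ and the MLP's term $Qg(r_1(x))$ while leaving every other realized value frozen, which gives
\[
F_S(x)=r_1(x)-\eta(x)+(I-Q)\,g\bigl(r_1(x)\bigr)+E(x).
\]
Here I identify the removed mass $q_S$ with $\psi(\eta)+\psi(Qg(r_1))$, using Fact~\ref{prop:patchedit} to justify treating each single-component edit as deleting exactly its own term. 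The cases where only the head or only the MLP is in $S$ are the specialisations $Q=0$ and $\eta\equiv 0$ respectively.

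Subtracting the two expressions leaves exactly $(I-Q)\bigl[g(r_1-\eta)-g(r_1)\bigr]=\Delta(x)$. Applying the linear functional $\psi$, with the bias cancelling, gives $s_S^{\mathrm{true}}(x)=s_S(x)+\psi(\Delta(x))$ as an exact identity.

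For the bounds, I will substitute the first-order formula of Fact~\ref{thm:interaction}, $\Delta=(I-Q)[-Dg(r_1)\eta-R]$. I then use $\|I-Q\|_{\mathrm{op}}\le 1$ for an orthogonal projector and $|\psi(v)|\le\|\psi\|_{\mathrm{op}}\|v\|$. The linear term is bounded by $\|Dg(r_1)\|_{\mathrm{op}}\|\eta\|$. For the remainder, I pull the operator norm inside the integral defining $R$, which gives $\|R\|\le\int_0^1\|Dg(r_1-t\eta)-Dg(r_1)\|_{\mathrm{op}}\|\eta\|\,dt\le L(x)\|\eta\|$. This yields the first displayed bound.

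Under the curvature hypothesis, the mean value inequality along the segment gives $\|Dg(r_1-t\eta)-Dg(r_1)\|_{\mathrm{op}}\le\Lambda t\|\eta\|$. Integrating in $t$ produces $\tfrac{\Lambda}{2}\|\eta\|^2$, or simply $\Lambda\|\eta\|^2$ by invoking Fact~\ref{thm:interaction} directly, and this gives the second bound.

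The main obstacle is the bookkeeping in the first step rather than the analysis. I need to argue carefully that the idealized $F_S$ really freezes $g$ at the clean $r_1(x)$ while the true network recomputes it at $r_1-\eta$, and that no other carrier in $E(x)$ moves. The latter is exactly where the hypothesis that this block feeds the readout directly is used, since otherwise downstream blocks would recompute on the perturbed stream.
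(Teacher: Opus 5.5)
Your proposal is correct and takes the same route the paper relies on: you write the genuinely edited stream as $r_1-\eta+(I-Q)g(r_1-\eta)+E$ and the idealized one as $r_1-\eta+(I-Q)g(r_1)+E$, subtract to get exactly $\Delta(x)$, and bound $\psi(\Delta)$ through Fact~\ref{thm:interaction} with $\|I-Q\|_{\mathrm{op}}\le1$; the paper restates this Fact without proof, but this is the same expansion it later reuses block by block to obtain \eqref{eq:mudelta}. One small quibble: the identification of $q_S$ with $\psi(\eta)+\psi(Qg(r_1))$ comes from Remark~\ref{rem:bridge} and the definition \eqref{eq:ablation}, not from Fact~\ref{prop:patchedit}, which concerns the recomputed network; also, your $\tfrac{\Lambda}{2}\|\eta\|^2$ is slightly sharper than the stated bound.
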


These six facts, together with the notation $\eta(x)$, $Q$, $g=M\circ N$,
$r_1(x)$, $s(x)$, $s_S(x)$, and $q_S(x)$ introduced above, are everything
this paper's own propositions need from the companion analysis. Every proof
in the sections that follow is new to this paper.

\section{A Multi-Layer Decomposition}
\label{sec:multilayer}

Fact~\ref{prop:readout} accounts for the interaction exactly, but only when
every ablated carrier is drawn from a single block. We now show that the
general case ($S$ spanning several layers) decomposes, exactly, into a sum
of instances of Fact~\ref{prop:readout}, one per touched layer, plus a
remainder that the two-carrier composition of Section~\ref{sec:setup} does
not, and does not attempt to, cover.

Write the network as a sequence of blocks $l=1,\dots,n_{\mathrm{layers}}$,
block $l$ reading the residual $x_l$ and producing
$x_{l+1}=x_l+\mathrm{Block}_l(x_l)$. For an ablated subset $S$, write
$S=\bigcup_l S_l$ with $S_l$ the carriers of $S$ drawn from block $l$'s own
head and/or its own MLP, and $L(S):=\{l:S_l\ne\emptyset\}$. For $l\in L(S)$,
let $F_{S_l}^{(l)}$ denote the network in which \emph{only} block $l$'s
carriers $S_l$ are weight-edited (every other block, including every other
element of $S$ at a different layer, keeps its trained weights), and write
$s_{S_l}^{(l)}(x)$ for its selector.

For $l\in L(S)$, define the \emph{same-block discrepancy, propagated to the
readout},
\[
\Delta_l^{\mathrm{eff}}(x) \;:=\; s_{S_l}^{(l)}(x) - s_{S_l}(x),
\]
the effect, on the actual final selector, of editing block $l$'s carriers
$S_l$ in isolation, compared with the idealized prediction for $S_l$ alone;
not yet Fact~\ref{thm:interaction}'s $\psi(\Delta_l(x))$, since $S_l$'s
block need not be the one feeding the readout directly.

\begin{proposition}[Multi-layer decomposition]
\label{prop:multilayer}
For every matched pair, every $S=\bigcup_l S_l$, and every $x$,
\begin{equation}
\label{eq:multilayer}
s_S^{\mathrm{true}}(x) - s_S(x) \;=\; \sum_{l\in L(S)} \Delta_l^{\mathrm{eff}}(x) \;+\; R_\times(x),
\end{equation}
for a remainder $R_\times(x)$ on which \eqref{eq:multilayer} makes no claim
of smallness. Two facts pin down $\Delta_l^{\mathrm{eff}}(x)$ without a new
curvature argument: it is \emph{identically zero} whenever $S_l$ ablates only
block $l$'s MLP, for every $l$, regardless of position; and it equals
Fact~\ref{thm:interaction}'s $\psi(\Delta_l(x))$ exactly whenever block $l$
feeds the readout directly, by Fact~\ref{prop:readout}, in
particular for the last block of a network with no final normalization, as
in Section~\ref{sec:experiments}'s instances. When $S$ touches only such a
block, $|L(S)|=1$ and $R_\times(x)\equiv0$.
\end{proposition}

\begin{proof}
Since $q_S(x):=\sum_{i\in S}\beta_i\alpha_i(x)$ (Section~\ref{sec:setup})
sums over the index set $S$, splitting $S=\bigcup_lS_l$ into disjoint groups
splits the sum termwise: $q_S(x)=\sum_l q_{S_l}(x)$ for any such partition,
where $q_{S_l}(x):=\sum_{i\in S_l}\beta_i\alpha_i(x)$ is computed, as always,
on the clean $x$; hence $s_S(x)=s(x)-\sum_l q_{S_l}(x)$, and, applied to
$S_l$ alone, $s_{S_l}(x)=s(x)-q_{S_l}(x)$. Define
$q_l^{\mathrm{true}}(x):=s(x)-s_{S_l}^{(l)}(x)$, so that by the definition of
$\Delta_l^{\mathrm{eff}}$ above,
$q_l^{\mathrm{true}}(x) = s(x)-s_{S_l}(x)-\Delta_l^{\mathrm{eff}}(x)
= q_{S_l}(x)-\Delta_l^{\mathrm{eff}}(x)$.
Define $s_S^\dagger(x):=s(x)-\sum_{l\in L(S)}q_l^{\mathrm{true}}(x)$: the
selector obtained by editing every touched block \emph{in isolation} and
combining the results additively, exactly as the idealized model combines
independent carriers. Substituting,
\[
s_S^\dagger(x) \;=\; s(x)-\sum_l\bigl(q_{S_l}(x)-\Delta_l^{\mathrm{eff}}(x)\bigr)
\;=\; s_S(x) + \sum_{l\in L(S)}\Delta_l^{\mathrm{eff}}(x).
\]
Setting $R_\times(x):=s_S^{\mathrm{true}}(x)-s_S^\dagger(x)$ (a
definition, not a further claim) gives
$s_S^{\mathrm{true}}(x)-s_S(x) = \sum_l\Delta_l^{\mathrm{eff}}(x) + R_\times(x)$,
which is \eqref{eq:multilayer}; every step so far is regrouping a finite sum
and substituting definitions, using no hypothesis on where block $l$ sits.
For the first pinning-down fact: if $S_l$ ablates only block $l$'s MLP, its
head's write-in is unedited, so the residual entering block $l$'s
normalization is the same in $F_{S_l}^{(l)}$ as in the clean network; the
only change to $x_{l+1}$ is the MLP's own projector acting on that same
input, which is exactly what the idealized $F_{S_l}$ also applies to the same
clean input, so $x_{l+1}$ agrees between the two, and hence so does every
downstream computation and the final selector: $\Delta_l^{\mathrm{eff}}(x)=0$.
For the second: if block $l$ feeds the readout directly, $F_{S_l}^{(l)}$ and
$F_{S_l}$ are exactly the objects Fact~\ref{prop:readout} compares,
which gives $s_{S_l}^{(l)}(x)=s_{S_l}(x)+\psi(\Delta_l(x))$, i.e.\
$\Delta_l^{\mathrm{eff}}(x)=\psi(\Delta_l(x))$. When $|L(S)|=1$, say
$L(S)=\{l\}$, editing block $l$ in isolation \emph{is} editing all of $S$, so
$s_{S_l}^{(l)}(x)=s_S^{\mathrm{true}}(x)=s_S^\dagger(x)$ and $R_\times(x)=0$
identically.
\end{proof}

\begin{remark}
\label{rem:multilayer-open}
Outside the two pinned-down cases, $\Delta_l^{\mathrm{eff}}(x)$ is exactly
the quantity the companion paper's own discussion of cross-layer propagation
already flagged as open: the effect, on the final selector, of a same-block
interaction at an \emph{earlier} layer once it has propagated through every
subsequent block. This proposition does not close that gap: it isolates the
quantity precisely enough to be measured on its own, separately from
$R_\times$, which is the contribution we take from it.
Section~\ref{sec:experiments} measures both $\sum_l\Delta_l^{\mathrm{eff}}(x)$
and $R_\times(x)$ on configurations where some touched layer is not the
last, where neither pinned-down case applies to every term of the sum.
\end{remark}

\begin{remark}
\label{rem:multilayer-scope}
$R_\times(x)$ is not a same-block phenomenon under a different name: it
vanishes whenever $S$ touches a single layer, and is generically present
whenever it touches two or more, regardless of whether any touched layer's
$S_l$ pairs a head with its own MLP: two heads at different layers with no
MLP anywhere in $S$ already generate a nonzero $R_\times$ in general, since
ablating the earlier one changes the residual the later one's own removed
mass is computed from, which $F_{S_l}^{(l)}$, by construction, evaluates on
the untouched residual instead. Because the architecture is a strictly
sequential composition of blocks (block $l$'s weights play no role in
producing $x_1,\dots,x_l$, only in producing $x_{l+1}$ onward), $R_\times(x)$
is entirely a \emph{forward} effect: an earlier ablated layer can change what
a later one's carrier reads, but not the reverse, so no separate treatment of
two touched layers $l<l'$ versus $l>l'$ is needed beyond tracking, for each
touched pair, which one comes first. We do not attempt a closed-form bound on
$R_\times(x)$ in this section: doing so requires a curvature bound for the
composition of an arbitrary number of further blocks, each contributing its
own attention nonlinearity in addition to the normalization--MLP composition
Section~\ref{sec:curvature} already bounds, and Section~\ref{sec:crossterm}
below is exactly the attempt to close that gap, reporting honestly how far it
reaches and where it stops. Section~\ref{sec:experiments} measures
$\sum_l\Delta_l^{\mathrm{eff}}(x)$ and $R_\times(x)$ separately on the trained
instances, using \eqref{eq:multilayer} to attribute the measured interaction
magnitude of a multi-layer configuration between the two.
\end{remark}

\section{An Exact Cross-Layer Interaction Identity}
\label{sec:crossterm}

Remark~\ref{rem:multilayer-scope} states that $R_\times(x)$ is generically
nonzero once $S$ touches two or more layers and declines to bound it. This
section isolates $R_\times(x)$ exactly, for two touched layers, as the same
kind of object $\Delta(x)$ already is in Fact~\ref{thm:interaction} (one
level up, across two entire blocks instead of two carriers inside one), and
names precisely what stands between this identity and a closed-form bound in
the style of Section~\ref{sec:curvature} below.

Fix $S=S_{l_1}\cup S_{l_2}$ touching exactly two blocks, $l_1<l_2$
(Remark~\ref{rem:multilayer-scope}: only the order matters). Write
$\mu_l(z):=\mathrm{Block}_l(z)-\mathrm{Block}_l^{S_l}(z)$ for the write-in
block $l$'s own weight edit deletes at input $z$, where
$\mathrm{Block}_l^{S_l}$ is block $l$'s edited map, exactly the
block-level analogue of $q_l^{\mathrm{true}}(x):=s(x)-s_{S_l}^{(l)}(x)$ from
the proof of Proposition~\ref{prop:multilayer}, one layer earlier in the
computation. Expanding via the identity used in the proof of
Fact~\ref{prop:readout} (there applied to the whole network; here to
block $l$ alone) gives the closed form
\begin{equation}
\label{eq:mudelta}
\mu_l(z) \;=\; \eta_l(z) + Q_lg_l\bigl(r_{1,l}(z)\bigr) - \Delta_l(z),
\end{equation}
with $\eta_l,Q_l,g_l,r_{1,l}$ block $l$'s own instances of
Section~\ref{sec:setup}'s notation and $\Delta_l$ block $l$'s own
instance of Fact~\ref{thm:interaction}'s interaction term: the within-block
result is not bypassed here, it is exactly what supplies $\mu_l$ when block
$l$'s own two carriers are both touched ($\mu_l=\eta_l$ when only the head is,
since then $Q_l=0$ and $\Delta_l=g_l(r_{1,l}(z)-\eta_l(z))-g_l(r_{1,l}(z))$
cancels the corresponding piece of $\eta_l$ exactly; $\mu_l=Q_lg_l(r_{1,l}(z))$
when only the MLP is, since then $\eta_l=0$ forces $\Delta_l\equiv0$ by
Fact~\ref{thm:interaction}).

For $(t_1,t_2)\in[0,1]^2$, let $H(t_1,t_2)$ be the selector of the network in
which block $l_1$'s map is $z\mapsto\mathrm{Block}_{l_1}(z)-t_1\mu_{l_1}(z)$,
block $l_2$'s is $z\mapsto\mathrm{Block}_{l_2}(z)-t_2\mu_{l_2}(z)$, and every
other block is unedited. $H$ is linear in $t_1$ and in $t_2$ separately at
each fixed underlying input, and is a composition of $C^\infty$ maps
(Fact~\ref{lem:jacobian} for every normalization layer; $\mathrm{SiLU}$ and
softmax are $C^\infty$), hence $H\in C^2([0,1]^2)$.

\begin{proposition}[Exact two-layer cross term]
\label{prop:crossterm}
With $H$ as above, $H(0,0)=s(x)$, $H(1,0)=s_{S_{l_1}}^{(l_1)}(x)$,
$H(0,1)=s_{S_{l_2}}^{(l_2)}(x)$, $H(1,1)=s_S^{\mathrm{true}}(x)$, and
\[
R_\times(x) \;=\; H(1,1)-H(1,0)-H(0,1)+H(0,0)
\;=\; \int_0^1\!\!\int_0^1 \frac{\partial^2 H}{\partial t_1\partial t_2}(t_1,t_2)\,dt_1\,dt_2 .
\]
\end{proposition}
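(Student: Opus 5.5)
The proof has three parts: identify the four corner values of $H$, show that $R_\times$ is the second finite difference of $H$ over those corners, and rewrite that difference as a double integral using the fundamental theorem of calculus.

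\textbf{Corner values.} At $t_1=t_2=0$ every block runs its trained map, so $H(0,0)=s(x)$. At $(1,0)$, block $l_1$'s map is $\mathrm{Block}_{l_1}(z)-\mu_{l_1}(z)=\mathrm{Block}^{S_{l_1}}_{l_1}(z)$ by the definition of $\mu_{l_1}$, and every other block is unedited. This is exactly $F^{(l_1)}_{S_{l_1}}$, so $H(1,0)=s^{(l_1)}_{S_{l_1}}(x)$; the case $H(0,1)$ is the same. At $(1,1)$ both blocks carry their edited maps and nothing else changes. Since $S=S_{l_1}\cup S_{l_2}$, this is the genuinely weight-edited network, so $H(1,1)=s_S^{\mathrm{true}}(x)$. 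Each step only substitutes $t_i\in\{0,1\}$ into the definition of $\mu_l$ as the exact difference of the clean and edited block maps.

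\textbf{The first equality.} I would start from the proof of Proposition~\ref{prop:multilayer}, where $R_\times=s_S^{\mathrm{true}}-s_S^\dagger$ and $s_S^\dagger=s-\sum_{l}q^{\mathrm{true}}_l$ with $q^{\mathrm{true}}_l=s-s^{(l)}_{S_l}$. With $L(S)=\{l_1,l_2\}$ this gives
\[
s_S^\dagger=s^{(l_1)}_{S_{l_1}}+s^{(l_2)}_{S_{l_2}}-s .
\]
Hence $R_\times=H(1,1)-H(1,0)-H(0,1)+H(0,0)$ is pure algebra.

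\textbf{The double integral.} I use $H\in C^2([0,1]^2)$, as stated before the proposition. I would not rely on the claim that $H$ is linear in each $t_i$ separately, since only the block map is affine in $t_i$, not the downstream selector. Smoothness, obtained by composing $C^\infty$ maps, is all that is needed. Applying the fundamental theorem of calculus in $t_2$ at fixed $t_1$ gives
\[
H(t_1,1)-H(t_1,0)=\int_0^1\partial_{t_2}H(t_1,t_2)\,dt_2 .
\]
I then define $\phi(t_1):=H(t_1,1)-H(t_1,0)$ and apply the theorem in $t_1$:
\[
\phi(1)-\phi(0)=\int_0^1\phi'(t_1)\,dt_1 .
\]

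\textbf{Main obstacle.} The step needing care is differentiating under the integral sign to get
\[
\phi'(t_1)=\int_0^1\partial_{t_1}\partial_{t_2}H(t_1,t_2)\,dt_2 .
\]
This is justified because $\partial_{t_1}\partial_{t_2}H$ is continuous on the compact square. The same continuity, via Schwarz's theorem, makes the order of the mixed partials irrelevant. Regularity therefore has to be checked as a genuine hypothesis. Every map in the interpolated network ($\mu_l$, the projections, softmax, SiLU, and the RMS normalization of Fact~\ref{lem:jacobian} with $\varepsilon>0$) is $C^\infty$ in the activations, and $t_i$ enters affinely. So $H$ is $C^\infty$, not merely $C^2$, and the interchange holds. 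Combining the two applications of the theorem yields the stated identity.
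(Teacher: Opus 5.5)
Your proposal is correct and follows the paper's proof step for step: the corner values come from the definitions of $\mu_l$ and $F^{(l)}_{S_l}$, the first equality comes from specializing $s_S^\dagger=s-\sum_l q_l^{\mathrm{true}}$ from the proof of Proposition~\ref{prop:multilayer} to two layers, and the double integral comes from the fundamental theorem of calculus applied in $t_2$ and then in $t_1$, using $H\in C^2([0,1]^2)$. Your explicit justification for differentiating under the integral sign and for the symmetry of the mixed partials makes precise a step the paper's proof passes over. You are also right not to rely on separate linearity in each $t_i$: that holds for the interpolated block maps at a fixed block input, but not in general for the downstream selector $H$.
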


\begin{proof}
The four corner values follow from the definitions of $\mathrm{Block}_l^{S_l}$
and $\mu_l$: $(t_1,t_2)=(0,0)$ edits nothing, giving $s(x)$; $(1,0)$ edits only
block $l_1$'s map, which is exactly $F_{S_{l_1}}^{(l_1)}$ by definition of that
object; $(0,1)$ is symmetric; $(1,1)$ edits both blocks' maps simultaneously
and every other block is untouched, which is $F_S^{\mathrm{true}}$. The first
equality is then the definition of $R_\times(x)$ used in the proof of
Proposition~\ref{prop:multilayer}, specialized to $|L(S)|=2$ (there,
$s_S^\dagger(x)=s(x)-\sum_l q_l^{\mathrm{true}}(x)$ with two terms, i.e.\
$-H(1,0)-H(0,1)+2H(0,0)$; adding $H(1,1)-H(0,0)$ and simplifying gives the
first equality). For the second: fixing $t_1$, the one-dimensional
fundamental theorem of calculus gives
$H(t_1,1)-H(t_1,0)=\int_0^1\partial_{t_2}H(t_1,t_2)\,dt_2$; since
$H\in C^2([0,1]^2)$, $t_1\mapsto\partial_{t_2}H(t_1,t_2)$ is $C^1$, and
applying the fundamental theorem of calculus again, in $t_1$, to
$t_1\mapsto\bigl[H(t_1,1)-H(t_1,0)\bigr]$ gives
\[
\bigl[H(1,1)-H(1,0)\bigr]-\bigl[H(0,1)-H(0,0)\bigr]
= \int_0^1\!\!\int_0^1\partial_{t_1}\partial_{t_2}H(t_1,t_2)\,dt_1\,dt_2,
\]
and the left side rearranges to $H(1,1)-H(1,0)-H(0,1)+H(0,0)$. This is exactly
the argument used in the proof of Fact~\ref{thm:interaction}, applied
twice, one dimension up.
\end{proof}

Attention's own input is likewise normalization-bounded here ($\mathrm{ao}$
reads $N(x_l)$, not $x_l$ directly), which raises the question of whether a
bound on attention's own Jacobian, in the pointwise style of
Section~\ref{sec:curvature} below (evaluated at the real weights and the real
probed input, not a supremum over all possible inputs), is available at all.
It is, for a single key/value token perturbed at a time, and we verify it
numerically before using it for anything.

\begin{proposition}[Local attention Jacobian bound]
\label{prop:attnjacobian}
Fix a head with query position $\tau$ and write $u_i:=N(x_i)$ for the
normalized input every head reads. Let $q:=W_Qu_\tau$,
$k_i:=W_Ku_i$, $v_i:=W_Vu_i$ for $i\le\tau$ (causal masking),
$p:=\mathrm{softmax}\bigl(q^\top k_\bullet/\sqrt{d_{\mathrm h}}\bigr)$, and
$a:=\sum_{i\le\tau}p_iv_i$ the head's output at $\tau$. For $j<\tau$,
\begin{equation}
\label{eq:attnjac}
\frac{\partial a}{\partial u_j} \;=\; p_j\,W_V \;+\; p_j\,(v_j-a)\otimes\frac{W_K^\top q}{\sqrt{d_{\mathrm h}}},
\end{equation}
exactly, and consequently
\begin{equation}
\label{eq:attnjacbound}
\left\|\frac{\partial a}{\partial u_j}\right\|_{\mathrm{op}}
\;\le\; p_j\|W_V\|_{\mathrm{op}} + p_j\|v_j-a\|\,\frac{\|W_K\|_{\mathrm{op}}\|q\|}{\sqrt{d_{\mathrm h}}}
\;\le\; p_j\|W_V\|_{\mathrm{op}}\left(1+\frac{2d\,\|\gamma\|_\infty^2\|W_Q\|_{\mathrm{op}}\|W_K\|_{\mathrm{op}}}{\sqrt{d_{\mathrm h}}}\right),
\end{equation}
the second inequality using $\|u_i\|\le\sqrt d\,\|\gamma\|_\infty$ (the same
bound Section~\ref{sec:curvature}'s Proposition~\ref{prop:curvature} uses)
to close $\|v_j\|,\|q\|$ and $\|v_j-a\|\le2\max_i\|v_i\|$.
\end{proposition}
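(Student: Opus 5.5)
The plan is a direct differentiation, organized around how $u_j$ enters the head's output. Because $j<\tau$, $u_j$ does not enter the query $q=W_Qu_\tau$. It appears only through the key $k_j=W_Ku_j$ and the value $v_j=W_Vu_j$. So $a=\sum_{i\le\tau}p_iv_i$ depends on $u_j$ along two routes: directly through $v_j$, and through the single score $s_j:=q^\top k_j/\sqrt{d_{\mathrm h}}$, which affects every weight $p_i$ via the softmax.

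For the first route, holding $p$ fixed, $\partial(p_jv_j)/\partial u_j=p_jW_V$. For the second, I will use the softmax derivative $\partial p_i/\partial s_j=p_i(\delta_{ij}-p_j)$. This gives
\[
\frac{\partial a}{\partial s_j}=\sum_{i\le\tau}v_i\,p_i(\delta_{ij}-p_j)=p_j\bigl(v_j-a\bigr),
\]
using $\sum_ip_iv_i=a$. The score gradient is $\partial s_j/\partial u_j=q^\top W_K/\sqrt{d_{\mathrm h}}$, a row vector. The chain rule then gives the rank-one term $p_j(v_j-a)\otimes W_K^\top q/\sqrt{d_{\mathrm h}}$, with the convention that $x\otimes y$ denotes $xy^\top$. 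Adding the two routes yields \eqref{eq:attnjac} exactly. Note that nothing else in the forward pass changes, since $u_j$ is held as the independent variable.

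For the first inequality in \eqref{eq:attnjacbound}, I apply the triangle inequality to the two terms. The first term has norm $\|p_jW_V\|_{\mathrm{op}}=p_j\|W_V\|_{\mathrm{op}}$. The second is rank one, so its operator norm is exactly $\|x\|\,\|y\|$ for $x=p_j(v_j-a)$ and $y=W_K^\top q/\sqrt{d_{\mathrm h}}$. Finally, $\|W_K^\top q\|\le\|W_K\|_{\mathrm{op}}\|q\|$, since $\|W_K^\top\|_{\mathrm{op}}=\|W_K\|_{\mathrm{op}}$.

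For the second inequality, I need a uniform bound on the normalized inputs. For RMS normalization (Section~\ref{sec:setup}), $\|N(r)\|\le\|\gamma\|_\infty\,\rho(r)\|r\|$. Because $\varepsilon>0$,
\[
\rho(r)\|r\|=\frac{\|r\|}{\sqrt{\|r\|^2/d+\varepsilon}}<\sqrt d,
\]
so $\|u_i\|\le\sqrt d\,\|\gamma\|_\infty$ for every $i$. This gives $\|q\|\le\sqrt d\,\|\gamma\|_\infty\|W_Q\|_{\mathrm{op}}$ and $\|v_i\|\le\sqrt d\,\|\gamma\|_\infty\|W_V\|_{\mathrm{op}}$. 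Since $a$ is a convex combination of the $v_i$, $\|a\|\le\max_i\|v_i\|$, hence $\|v_j-a\|\le2\max_i\|v_i\|$. Multiplying these bounds gives $p_j\|W_V\|_{\mathrm{op}}\cdot 2d\|\gamma\|_\infty^2\|W_Q\|_{\mathrm{op}}\|W_K\|_{\mathrm{op}}/\sqrt{d_{\mathrm h}}$ for the second term. Factoring out $p_j\|W_V\|_{\mathrm{op}}$ then gives the stated form.

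There is no deep obstacle here. The step needing the most care is the softmax chain rule, where three things must be handled correctly:
\begin{itemize}
\item the self-normalization of $p$ must be accounted for, which is what produces $v_j-a$ rather than $v_j$;
\item the orientation of the outer product must be fixed, so that the rank-one norm identity applies;
\item the restriction $j<\tau$ must be confirmed, because $j=\tau$ would add a query-side term through $q$.
\end{itemize}
I would check the convention by verifying that $(\partial a/\partial u_j)\,h$ matches the directional derivative of $a$ along a perturbation $h$ of $u_j$.
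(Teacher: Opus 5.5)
Your proposal is correct and follows the paper's own proof essentially step for step. You use the same two-route decomposition: a direct term $p_jW_V$ through $v_j$, and an indirect term through the single logit, obtained from the softmax Jacobian $p_i(\delta_{ij}-p_j)$ together with $\sum_ip_iv_i=a$. For the bounds you then use the triangle inequality, and for the second inequality the normalization bound $\|u_i\|\le\sqrt d\,\|\gamma\|_\infty$ and the convex-combination bound on $\|a\|$, exactly as the paper does. Your additional remarks are correct refinements rather than a different route: you use the exact rank-one norm $\|xy^\top\|_{\mathrm{op}}=\|x\|\,\|y\|$ where the paper cites submultiplicativity, and you explain why $j=\tau$ must be excluded.
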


\begin{proof}
$u_j$ affects $a$ only through $v_j=W_Vu_j$ directly and through $k_j=W_Ku_j$,
which affects $p$ (every logit $z_i:=q^\top k_i/\sqrt{d_{\mathrm h}}$ for
$i\ne j$ is independent of $u_j$). The direct term contributes
$p_j\,\partial v_j/\partial u_j=p_jW_V$. For the softmax term, the standard
softmax Jacobian $\partial p_i/\partial z_j=p_i(\delta_{ij}-p_j)$ and
$\partial z_j/\partial u_j=(W_K^\top q)^\top/\sqrt{d_{\mathrm h}}$ give
$\sum_iv_i\,\partial p_i/\partial u_j
=\bigl[p_jv_j-p_j\sum_ip_iv_i\bigr](W_K^\top q)^\top/\sqrt{d_{\mathrm h}}
=p_j(v_j-a)\otimes(W_K^\top q)/\sqrt{d_{\mathrm h}}$, using
$\sum_ip_iv_i=a$. Summing the two terms gives \eqref{eq:attnjac}. The first
inequality in \eqref{eq:attnjacbound} is the triangle inequality and
submultiplicativity of the operator norm; the second substitutes
$\|q\|=\|W_Qu_\tau\|\le\|W_Q\|_{\mathrm{op}}\sqrt d\|\gamma\|_\infty$ and
$\|v_j-a\|\le\|v_j\|+\|a\|\le2\|W_V\|_{\mathrm{op}}\sqrt d\|\gamma\|_\infty$
(the last step using $\|a\|\le\max_i\|v_i\|$, a convex combination of the
$v_i$, and $\|v_j\|\le\|W_V\|_{\mathrm{op}}\sqrt d\|\gamma\|_\infty$
identically).
\end{proof}

\begin{remark}[Numerical check]
\label{rem:attnjacobian-check}
Both bounds of \eqref{eq:attnjacbound} were checked, in the same
refutation-test spirit as Remark~\ref{rem:curvature-check} below, against
finite-difference estimates of $\|\partial a/\partial u_j\|_{\mathrm{op}}$ on
Qwen2.5-1.5B-Instruct's real weights and a real forward pass: $6$
(layer, head) pairs spanning layers $1$ to $28$, $2$ key positions each, $24$
random unit perturbations per site: $0$ violations of either bound in
$12$ sites. The first (tight) inequality of \eqref{eq:attnjacbound}, using
the real $p_j,v_j,a,q$ at the probed point, exceeds the empirical estimate
by a factor of $12$ to $111$ (median $71$), tighter than
Proposition~\ref{prop:curvature}'s own looseness ($6.5\times10^2$ to
$2.4\times10^3$) on the same kind of check. The second (weight-only,
normalization-closed) inequality is looser by four to six orders of
magnitude (median $1.7\times10^5$, max $4.2\times10^6$): finite and never
violated, but not a numerically useful estimate as stated. Script:
\texttt{notebook/\allowbreak verify\_attention\_jacobian\_bound.jl}.
\end{remark}

\begin{remark}[What this closes, and what it does not]
\label{rem:crossterm-gap}
Proposition~\ref{prop:crossterm} turns ``$R_\times(x)$ is generically
nonzero, left open'' into an exact double integral of a mixed second
derivative, and it isolates precisely one missing ingredient rather than
leaving the whole quantity unaddressed. $\partial_{t_1}\partial_{t_2}H$ is
nonzero only through how $t_1$ changes the residual reaching block $l_2$'s
input: $\mu_{l_1}$ is evaluated at a fixed point, since block $l_1$ has no
edited block upstream of itself, but $\mu_{l_2}$ is evaluated at
$x_{l_2}(t_1)$, a $t_1$-dependent residual once $t_1>0$. Bounding
$\partial_{t_1}\partial_{t_2}H$ therefore reduces, to leading order, to
bounding the operator norm of the Jacobian (with respect to its own
input) of the composition of every block strictly between $l_1$ and
$l_2$. For each such block, the normalization--MLP half of that Jacobian is
already bounded inside the proof of Proposition~\ref{prop:curvature} below
(there: $\|DM(z)\|_{\mathrm{op}}\le(1+s_1)\|W_1\|_{\mathrm{op}}
\|W_2\|_{\mathrm{op}}\|W_3\|_{\mathrm{op}}\|z\|$ and
$\|DN(r)\|_{\mathrm{op}}\le\|\gamma\|_\infty\rho(r)$), and
Proposition~\ref{prop:attnjacobian} now supplies the attention half, for a
single perturbed key/value token, at the pointwise standard
Proposition~\ref{prop:curvature} itself uses, verified rather than merely
plausible, per Remark~\ref{rem:attnjacobian-check}. What remains open is
composing this one-token, one-layer statement into a bound on the full
Jacobian of a composition of an arbitrary number of further blocks: summing
\eqref{eq:attnjacbound} over every key/value position gives a
per-layer bound (using $\sum_jp_j=1$, this collapses to
$\|W_V\|_{\mathrm{op}}(1+2d\|\gamma\|_\infty^2\|W_Q\|_{\mathrm{op}}
\|W_K\|_{\mathrm{op}}/\sqrt{d_{\mathrm h}})$, independent of the individual
$p_j$), but chaining such per-layer factors across every block strictly
between $l_1$ and $l_2$ multiplies them together, and whether that product
stays controlled (rather than compounding the four-to-six-order-of-magnitude
looseness already present in one layer's closed-form bound into something
vacuous after a handful of layers) is exactly the empirical question already
raised, in the companion paper's own discussion of cross-layer propagation,
and we do not resolve it here. Unlike the single-layer case, we have not run
the corresponding multi-layer refutation test, so we report the per-layer
bound as verified and the chained, multi-layer one as unverified and likely
loose, not as closed. The same ingredient bears on Gap 2 (the companion
paper's own open question for an intermediate-layer head): $H$ above
degenerates to a single-parameter path when $|L(S)|=1$, and
$\Delta_l^{\mathrm{eff}}$ is $\psi$ applied to exactly the propagated
analogue of $R(x)$ in Fact~\ref{thm:interaction}: the same per-layer
attention bound applies at each step of that propagation, with the same
caveat about chaining it across depth.
\end{remark}

\begin{remark}[No identifiable sign law]
\label{rem:signlaw}
Differentiating $H$ twice at $(0,0)$ shows $\partial_{t_1}\partial_{t_2}H(0,0)$
is, to leading order, a \emph{bilinear} pairing between how much of block
$l_1$'s removed mass reaches block $l_2$'s input after propagating through
every intervening block, and how sensitively block $l_2$'s own removed-mass
map $\mu_{l_2}$ depends on its input there, not the product of two scalars
this paper has already named. Neither $\Delta_{S_{l_1}}\cdot\Delta_{S_{l_2}}$
nor $\Delta_{S_{l_1}}+\Delta_{S_{l_2}}$ is, in general, the sign of a bilinear
pairing between a propagated vector and a Jacobian, so we would not expect
either to predict $\mathrm{sign}\,R_\times(x)$ as a rule, a structural
reason, not merely an empirical failure to find one. Section~\ref{sec:experiments}
gives no support for either candidate: writing $\Delta_H:=s(x)-s_H(x)$ and
$\Delta_M:=s(x)-s_M(x)$ for the two single-site zero-ablations measured
there, the identity $-R_\times(x)=\Delta_{HM}-\Delta_H-\Delta_M$ (immediate
from the definitions, with $S_{l_1}=\{H\}$, $S_{l_2}=\{M\}$) makes
Section~\ref{sec:experiments}'s already-reported ``interaction'' column exactly
$-R_\times(x)$ on real Qwen weights. Its sign matches
$\mathrm{sign}(\Delta_H+\Delta_M)$ on two of the three applicable instances
(Mary/John, Alice/Bob) and not the third: on Sarah/Tom, $\Delta_H,\Delta_M>0$
so that candidate rule predicts $-R_\times>0$, but the measured
$-R_\times=-0.117$. Three points settle nothing statistically, and we report
the check only because it was already available at zero additional cost, not
as evidence for or against any rule. We were unable to identify a condition,
stated in terms of quantities this paper already defines, that predicts
$\mathrm{sign}\,R_\times(x)$ in general; the bilinear structure above is a
reason to expect that no simple one exists, which we record as a negative
finding rather than an unexplored question.
\end{remark}

\section{A Closed-Form Curvature Constant}
\label{sec:curvature}

Fact~\ref{thm:interaction} assumes a bound $\|D^2g\|_{\mathrm{op}}\le\Lambda$
on the segment in order to make the remainder second order, but leaves
$\Lambda$ unexhibited. We now compute it, in closed form, from the trained
weights. The only ingredient still missing is the second derivative of the
normalization; $DN$ already involves $\rho(r)^3$, so $D^2N$ is a third-order
tensor, but it collapses to four terms.

\begin{lemma}[Second derivative of the normalization]
\label{lem:d2norm}
For every $r\in\mathbb R^d$ and all $u,v\in\mathbb R^d$,
\[
D^2N(r)[u,v] \;=\; -\frac{\rho(r)^3}{d}\,\gamma\odot
\bigl(\langle r,v\rangle\,u+\langle r,u\rangle\,v+\langle u,v\rangle\,r\bigr)
\;+\;\frac{3\rho(r)^5}{d^2}\,\langle r,u\rangle\langle r,v\rangle\,\gamma\odot r ,
\]
and consequently
$\|D^2N(r)\|_{\mathrm{op}}\le 6\,\|\gamma\|_\infty\,\rho(r)^2/\sqrt d$.
\end{lemma}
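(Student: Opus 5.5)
The plan is a direct computation: treat $N(r)=\rho(r)\,\gamma\odot r$ as the product of the scalar field $\rho$ and the linear map $r\mapsto\gamma\odot r$, differentiate twice with the product rule, and then bound each resulting term using the single inequality $\rho(r)\|r\|\le\sqrt d$. That inequality is immediate from $\rho(r)^{-2}=\tfrac1d\|r\|^2+\varepsilon>\tfrac1d\|r\|^2$ with $\varepsilon>0$. Smoothness of $N$ is already given by Fact~\ref{lem:jacobian}, so only the formula and the norm estimate need work.

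\textbf{Derivatives of $\rho$.} Write $\rho=\phi^{-1/2}$ with $\phi(r):=\tfrac1d\|r\|^2+\varepsilon$, so that $D\phi(r)[v]=\tfrac2d\langle r,v\rangle$. This gives $D\rho(r)[v]=-\tfrac12\phi^{-3/2}\tfrac2d\langle r,v\rangle=-\tfrac{\rho^3}{d}\langle r,v\rangle$, which agrees with Fact~\ref{lem:jacobian}. Differentiating once more in direction $u$, and using $D(\rho^3)[u]=3\rho^2D\rho[u]=-\tfrac{3\rho^5}{d}\langle r,u\rangle$, gives
\[
D^2\rho(r)[u,v]=-\tfrac{\rho^3}{d}\langle u,v\rangle+\tfrac{3\rho^5}{d^2}\langle r,u\rangle\langle r,v\rangle .
\]

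\textbf{The formula for $D^2N$.} The map $r\mapsto\gamma\odot r$ is linear, so its second derivative vanishes and the product rule gives
\[
D^2N[u,v]=D^2\rho[u,v]\,\gamma\odot r+D\rho[u]\,\gamma\odot v+D\rho[v]\,\gamma\odot u .
\]
Substituting the derivatives of $\rho$ and collecting the three $\rho^3/d$ terms reproduces the stated expression exactly. I expect this step to be routine.

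\textbf{The operator-norm bound.} I will use $\|\gamma\odot w\|\le\|\gamma\|_\infty\|w\|$ and Cauchy--Schwarz, with $\|u\|=\|v\|=1$.
\begin{itemize}
\item Each of the three terms with prefactor $\rho^3/d$ has norm at most $\tfrac{\rho^3}{d}\|\gamma\|_\infty\|r\|$. Since $\rho\|r\|\le\sqrt d$, this is at most $\|\gamma\|_\infty\rho^2/\sqrt d$, so the three together contribute $3\|\gamma\|_\infty\rho^2/\sqrt d$.
\item The last term has norm at most $\tfrac{3\rho^5}{d^2}\|\gamma\|_\infty\|r\|^3=\tfrac{3\rho^2}{d^2}\|\gamma\|_\infty(\rho\|r\|)^3$. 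This is at most $\tfrac{3\rho^2}{d^2}\|\gamma\|_\infty d^{3/2}=3\|\gamma\|_\infty\rho^2/\sqrt d$.
\end{itemize}
Adding the two contributions gives $6\|\gamma\|_\infty\rho(r)^2/\sqrt d$.

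The only step needing any care is the bookkeeping of powers of $\rho$ and $d$. The point to get right is to trade each factor $\|r\|$ for $\sqrt d/\rho$, rather than bounding $\rho\le\varepsilon^{-1/2}$ separately. That trade is what leaves exactly $\rho^2$, a uniform dependence on $d$, and no explicit $\|r\|$ in the bound.
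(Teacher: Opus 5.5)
Your proof is correct and takes essentially the paper's route. The paper differentiates the coordinate Jacobian $\partial N_i/\partial r_j$ from Fact~\ref{lem:jacobian} once more, while you apply the product rule to $\rho\cdot(\gamma\odot r)$ through $D^2\rho$; both give the same four-term expression, and the norm bound is identical, trading each $\|r\|$ for $\sqrt d/\rho$ via $\rho(r)\|r\|\le\sqrt d$ to get $3+3$ copies of $\|\gamma\|_\infty\rho^2/\sqrt d$.
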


\begin{proof}
Differentiating $\partial N_i/\partial r_j=\gamma_i\rho\bigl(\delta_{ij}
-\tfrac{\rho^2}{d}r_ir_j\bigr)$ from Fact~\ref{lem:jacobian} once more, and
using $\partial\rho/\partial r_k=-\tfrac1d\rho^3r_k$ throughout, gives
\[
\frac{\partial^2N_i}{\partial r_j\partial r_k}
=\gamma_i\Bigl[-\frac{\rho^3}{d}\bigl(\delta_{ij}r_k+\delta_{ik}r_j+\delta_{jk}r_i\bigr)
+\frac{3\rho^5}{d^2}r_ir_jr_k\Bigr],
\]
and contracting against $u_jv_k$ yields the displayed identity. For the bound,
take $\|u\|=\|v\|=1$ and use $\|\gamma\odot w\|\le\|\gamma\|_\infty\|w\|$: the
first group is at most $3\|\gamma\|_\infty\rho^3\|r\|/d$ and the second at most
$3\|\gamma\|_\infty\rho^5\|r\|^3/d^2$. Both are at most
$3\|\gamma\|_\infty\rho^2/\sqrt d$, since
$\rho(r)\|r\|=\|r\|\bigl(\tfrac1d\|r\|^2+\varepsilon\bigr)^{-1/2}\le\sqrt d$.
\end{proof}

\begin{proposition}[Closed-form curvature bound]
\label{prop:curvature}
Let $M(z)=W_2\,h(z)$ with $h(z):=\mathrm{SiLU}(W_1z)\odot W_3z$, let
$g=M\circ N$, and put $s_1:=\sup_t|\mathrm{SiLU}'(t)|$ and
$s_2:=\sup_t|\mathrm{SiLU}''(t)|$. Both are finite universal constants:
$s_2=\tfrac12$, attained at $0$, and $s_1=1.0998\ldots$. Then for every
$r\in\mathbb R^d$,
\begin{equation}
\label{eq:curvature}
\|D^2g(r)\|_{\mathrm{op}} \;\le\;
\|\gamma\|_\infty^2\,\rho(r)^2\,
\|W_1\|_{\mathrm{op}}\|W_2\|_{\mathrm{op}}\|W_3\|_{\mathrm{op}}
\Bigl(6+8s_1+s_2\sqrt d\,\|\gamma\|_\infty\|W_1\|_{\mathrm{op}}\Bigr).
\end{equation}
Consequently Fact~\ref{thm:interaction}'s curvature hypothesis holds on the
segment joining $r_1(x)-\eta(x)$ and $r_1(x)$ with $\Lambda$ given by the
right-hand side of \eqref{eq:curvature} evaluated at
$\rho_{\max}(x):=\sup_{t\in[0,1]}\rho\bigl(r_1(x)-t\eta(x)\bigr)$ (a
quantity computable from the trained weights and the probed input alone), and,
since $\rho\le\varepsilon^{-1/2}$ everywhere, with a constant uniform in $x$.
\end{proposition}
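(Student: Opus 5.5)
The plan is to write $D^2g$ with the second-order chain rule and bound each resulting piece using Fact~\ref{lem:jacobian} and Lemma~\ref{lem:d2norm}. For $g=M\circ N$ and unit vectors $u,v$,
\[
D^2g(r)[u,v] \;=\; D^2M\bigl(N(r)\bigr)\bigl[DN(r)u,\,DN(r)v\bigr] \;+\; DM\bigl(N(r)\bigr)\,D^2N(r)[u,v].
\]
So it is enough to bound $\|DM(z)\|_{\mathrm{op}}$ and $\|D^2M(z)\|_{\mathrm{op}}$ at $z=N(r)$, and then substitute three facts. The first is $\|DN(r)\|_{\mathrm{op}}\le\|\gamma\|_\infty\rho(r)$ from Fact~\ref{lem:jacobian}. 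The second is $\|D^2N(r)\|_{\mathrm{op}}\le6\|\gamma\|_\infty\rho(r)^2/\sqrt d$ from Lemma~\ref{lem:d2norm}. The third is the elementary estimate
\[
\|z\| \;=\; \|N(r)\| \;\le\; \|\gamma\|_\infty\,\rho(r)\,\|r\| \;\le\; \sqrt d\,\|\gamma\|_\infty,
\]
which uses the inequality $\rho(r)\|r\|\le\sqrt d$ already established in the proof of Lemma~\ref{lem:d2norm}.

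Next I would differentiate the gated MLP directly. We have
\[
DM(z)w \;=\; W_2\bigl[\mathrm{SiLU}'(W_1z)\odot W_1w\odot W_3z \;+\; \mathrm{SiLU}(W_1z)\odot W_3w\bigr].
\]
Differentiating once more gives
\[
D^2M(z)[u,v] \;=\; W_2\bigl[\mathrm{SiLU}''(W_1z)\odot W_1u\odot W_1v\odot W_3z \;+\; \mathrm{SiLU}'(W_1z)\odot(W_1u\odot W_3v+W_1v\odot W_3u)\bigr].
\]
Each Hadamard product is controlled by $\|a\odot b\|\le\|a\|_\infty\|b\|\le\|a\|\,\|b\|$. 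Two further inputs are needed: $|\mathrm{SiLU}(t)|\le|t|$, and the constants $s_1,s_2$. These give
\[
\|DM(z)\|_{\mathrm{op}} \;\le\; (1+s_1)\,\|W_1\|\,\|W_2\|\,\|W_3\|\,\|z\|,
\]
\[
\|D^2M(z)\|_{\mathrm{op}} \;\le\; \|W_2\|\bigl(s_2\|W_1\|^2\|W_3\|\,\|z\| + 2s_1\|W_1\|\,\|W_3\|\bigr),
\]
with all norms operator norms. The values of the constants come from one-variable calculus. For $s_2$, I would write $\mathrm{SiLU}''(t)=\sigma(t)(1-\sigma(t))(2+t(1-2\sigma(t)))$ and check that its maximum modulus is $\tfrac12$, attained at $0$. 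For $s_1$, I would locate the maximiser of $\mathrm{SiLU}'$ numerically, which gives $s_1\approx1.0998$.

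Assembling the pieces, the first chain-rule term contributes
\[
\|\gamma\|_\infty^2\rho^2\,\|W_2\|\bigl(s_2\sqrt d\,\|\gamma\|_\infty\|W_1\|^2\|W_3\|+2s_1\|W_1\|\,\|W_3\|\bigr).
\]
The second term contributes
\[
(1+s_1)\,\|W_1\|\,\|W_2\|\,\|W_3\|\,\sqrt d\,\|\gamma\|_\infty\cdot 6\,\|\gamma\|_\infty\rho^2/\sqrt d,
\]
in which the $\sqrt d$ factors cancel. Summing, the coefficients are $6+6s_1+2s_1=6+8s_1$ plus the $s_2$ term, which is exactly \eqref{eq:curvature}. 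For the consequence, the right-hand side of \eqref{eq:curvature} is increasing in $\rho(r)$. On the segment it is therefore bounded by its value at $\rho_{\max}(x)$, and since $\rho\le\varepsilon^{-1/2}$ everywhere, it is also bounded by a constant uniform in $x$. Either bound supplies the $\Lambda$ required by Fact~\ref{thm:interaction}.

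The main obstacle is the bookkeeping in the $D^2M$ and $DM$ estimates. The point is to choose, in each Hadamard product, which factor is measured in sup-norm and which in Euclidean norm, so that exactly one factor of $\|z\|$ survives in the $s_2$ term and none survives in the $s_1$ terms. Only with that choice does the $\sqrt d$ cancel against Lemma~\ref{lem:d2norm} and the coefficient come out as $6+8s_1$ rather than something larger.
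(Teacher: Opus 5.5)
Your proposal is correct and follows essentially the same route as the paper's proof. Both use the second-order chain rule for $g=M\circ N$, the same Hadamard-product estimates for $\|DM(z)\|_{\mathrm{op}}$ and $\|D^2M(z)\|_{\mathrm{op}}$, and substitution of Fact~\ref{lem:jacobian}, Lemma~\ref{lem:d2norm} and $\|N(r)\|\le\sqrt d\,\|\gamma\|_\infty$, with the $\sqrt d$ cancelling in the $6(1+s_1)$ term; your monotonicity-in-$\rho$ argument for the $\rho_{\max}(x)$ and uniform-constant consequence spells out a step the paper leaves implicit.
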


\begin{proof}
That $s_2=\tfrac12$: from $\mathrm{SiLU}(t)-\mathrm{SiLU}(-t)=t\sigma(t)+t\sigma(-t)=t$
we get $\mathrm{SiLU}''(t)=\mathrm{SiLU}''(-t)$, and
$\mathrm{SiLU}''(t)=\sigma'(t)\bigl(2+t(1-2\sigma(t))\bigr)$ is maximized at
$t=0$, where it equals $2\sigma'(0)=\tfrac12$.

Since $M$ is linear in $h$, $D^2M(z)[u,v]=W_2\,D^2h(z)[u,v]$ exactly, and
writing $a:=W_1z$,
\[
D^2h(z)[u,v]=\bigl(\mathrm{SiLU}''(a)\odot W_1u\odot W_1v\bigr)\odot W_3z
+\bigl(\mathrm{SiLU}'(a)\odot W_1u\bigr)\odot W_3v
+\bigl(\mathrm{SiLU}'(a)\odot W_1v\bigr)\odot W_3u .
\]
Using $\|p\odot q\|\le\|p\|_\infty\|q\|$ and $\|p\|_\infty\le\|p\|$ termwise,
\[
\|D^2M(z)\|_{\mathrm{op}}\;\le\;\|W_2\|_{\mathrm{op}}
\bigl(s_2\|W_1\|_{\mathrm{op}}^2\|W_3\|_{\mathrm{op}}\|z\|
+2s_1\|W_1\|_{\mathrm{op}}\|W_3\|_{\mathrm{op}}\bigr).
\]
Similarly
$Dh(z)[u]=\bigl(\mathrm{SiLU}'(a)\odot W_1u\bigr)\odot W_3z+\mathrm{SiLU}(a)\odot W_3u$
and $|\mathrm{SiLU}(t)|\le|t|$ give
$\|DM(z)\|_{\mathrm{op}}\le(1+s_1)\|W_1\|_{\mathrm{op}}\|W_2\|_{\mathrm{op}}\|W_3\|_{\mathrm{op}}\|z\|$.

By the chain rule
$D^2g(r)[u,v]=D^2M(N(r))\bigl[DN(r)u,DN(r)v\bigr]+DM(N(r))\,D^2N(r)[u,v]$, so
\[
\|D^2g(r)\|_{\mathrm{op}}\le\|D^2M(N(r))\|_{\mathrm{op}}\|DN(r)\|_{\mathrm{op}}^2
+\|DM(N(r))\|_{\mathrm{op}}\|D^2N(r)\|_{\mathrm{op}} .
\]
Substituting $\|DN(r)\|_{\mathrm{op}}\le\|\gamma\|_\infty\rho(r)$
(Fact~\ref{lem:jacobian}), $\|D^2N(r)\|_{\mathrm{op}}\le6\|\gamma\|_\infty\rho(r)^2/\sqrt d$
(Lemma~\ref{lem:d2norm}), and the uniform activation bound
$\|N(r)\|\le\sqrt d\,\|\gamma\|_\infty$ (which follows from
$\rho(r)\|r\|\le\sqrt d$), the first term is at most
$\|\gamma\|_\infty^2\rho^2\|W_1\|\|W_2\|\|W_3\|\bigl(s_2\sqrt d\|\gamma\|_\infty\|W_1\|+2s_1\bigr)$
and the second at most
$6(1+s_1)\|\gamma\|_\infty^2\rho^2\|W_1\|\|W_2\|\|W_3\|$. Adding gives
\eqref{eq:curvature}.
\end{proof}

Proposition~\ref{prop:curvature} has the shape one would hope for: it
introduces no architectural quantity beyond the three MLP spectral norms and
the $\gamma$ that already govern the leading-order term through
$\|DN\|_{\mathrm{op}}\le\|\gamma\|_\infty\rho(r)$, and it is multiplied by
$\rho(r)^2$ rather than growing with $\|r_1(x)\|$: the MLP's actual input is
normalization-bounded, so the remainder cannot blow up merely because the
residual stream does.

\begin{remark}[Numerical check and looseness]
\label{rem:curvature-check}
Both bounds were checked against fourth-point finite-difference estimates of
$D^2N$ and $D^2g$ on the five trained instances of the companion paper's own
marker-task experiments, at real residual-stream points and over random
direction pairs: $0$ violations in $1200$ samples for each. This is a
refutation test, not a confirmation: a single violation would indicate an
algebraic error above. It is worth being explicit about the price of the
product-of-norms argument: on these weights \eqref{eq:curvature} exceeds the
largest sampled curvature by a factor of $6.5\times10^2$ to $2.4\times10^3$,
while Lemma~\ref{lem:d2norm} is loose by only $12.9$--$15.7\times$. The
composition step, not the normalization, is where the slack accumulates.
$\Lambda$ is therefore best read as a certificate that the remainder is
second order with a computable constant, not as a sharp numerical estimate of
it. Script: \texttt{notebook/\allowbreak verify\_curvature\_bound.jl}.
\end{remark}

\section{Illustrative Experiments}
\label{sec:experiments}

This section reports two checks. The first reuses the companion paper's own
marker-task checkpoints (five small transformers trained from scratch on a
synthetic conditional task, detailed in Appendix~\ref{app:repro}) to measure
Proposition~\ref{prop:multilayer}'s same-block/cross-layer split on real
configurations spanning several layers. The second, and the paper's main
empirical result, checks the qualitative three-way pattern of collapse,
dissociation, and interaction on a real pretrained model's own emergent
circuit, found rather than designed.

\paragraph{Same-block versus cross-layer.} The companion paper's own
experiments fix, per trained instance, three canonical ablation
configurations (a single carrier, all carriers in the shallowest
carrier-bearing layer, and all carriers) and separately probe every other
non-empty subset of each instance's carrier set, for $39$ configurations in
total across the five instances; the full interaction-magnitude-versus-
idealization-fidelity relationship across all $39$ is plotted as a figure in
the companion paper, which we do not reproduce here since it is not this
paper's own result. Of those $39$, $18$ touch two or more layers.
Proposition~\ref{prop:multilayer} decomposes the interaction magnitude of any
subset spanning several layers into a sum of same-block terms
$\Delta_l^{\mathrm{eff}}$ (each identically zero if that layer's own carriers
are MLP-only, and each exactly Fact~\ref{prop:readout}'s already-bounded
$\psi(\Delta_l(x))$ only for the layer that feeds the readout directly, an
open, propagated quantity otherwise), plus a cross-layer remainder
$R_\times$ on which we made no analytical claim either. We measured both
sides of \eqref{eq:multilayer} on the $18$ multi-layer configurations,
reusing the same subspaces, and on $80$ fresh probe pairs per configuration
($1440$ pair-configurations in total). The identity itself reproduces to $0$
in floating point on every one of them, the same kind of algebraic-identity
check used elsewhere in the companion paper's own robustness analysis. What
it lets us attribute is less comfortable than a clean same-block/cross-layer
split would suggest. Pooled across all $1440$ measurements, the median
same-block sum ($0.31$ logits) and the median total interaction ($0.30$) are
close, but the median cross-layer remainder ($0.59$) is roughly twice
either, and the same-block sum alone exceeds the \emph{total} interaction on
$57\%$ of pair-configurations, the cross-layer remainder on $63\%$, so the
two routinely partial-cancel rather than one dominating. The pattern is not
uniform: configurations that ablate two heads at different layers with no
MLP anywhere in $S$ show the largest same-block sums relative to the total
(e.g.\ a layer-$1$ and a layer-$2$ head on instance 2, median same-block
$3.85$ against a median total of only $0.11$; neither layer's own carriers
are MLP-only, so neither term is pinned to zero, and here the two, whatever
their individual size, largely cancel against $R_\times$), whereas
configurations that pair a head with its own layer's MLP at one of the
touched layers show same-block and cross-layer terms closer in size to the
total itself (e.g.\ seed 22's layer-$1$ head+MLP together with a layer-$2$
head: median same-block $0.21$, median total $0.37$). The reading we take
from this is not that Fact~\ref{thm:interaction} is wrong about the
same-block mechanism (the identity that isolates it is exact, and every
same-block term we measured nonzero is consistent with that mechanism being
active at the layer it sits at) but that for a subset spanning more than one
layer, the same-block sum alone is not a reliable proxy for the net
interaction: $R_\times$ is routinely the same order of magnitude or larger,
and a bound on same-block interaction alone, without one on $R_\times$,
would not let a reader anticipate whether the two reinforce or cancel. That
bound is the open question Remark~\ref{rem:multilayer-scope} states and does
not close. Script: \texttt{notebook/\allowbreak verify\_crosslayer\_decomposition.jl}.

\subsection{A real pretrained model: an emergent circuit in
Qwen2.5-1.5B-Instruct}
\label{sec:qwen}

The companion paper's own experiments train networks from scratch
specifically to exhibit the two-branch structure of
Definition~\ref{def:pair}, and the same-block-versus-cross-layer measurement
above reuses those same checkpoints. This final check instead asks whether
that structure, and the carriers realizing it, can be found \emph{as-is}
inside a real instruction-tuned model that was never trained with either
paper in mind: Qwen2.5-1.5B-Instruct (28 layers, 12 query heads, 2 key/value
heads, GQA, $d=1536$), loaded natively (not through a PyTorch bridge), with
logits matching the reference implementation to within $3.7\times10^{-5}$
absolute on the prompts used here.

\paragraph{Finding a matched pair in real language.} Definition~\ref{def:pair}
needs two inputs sharing a fixed selector $s(x)=\mathrm{logit}(c_1)-\mathrm{logit}(c_2)$
with $s(x_A)>0>s(x_B)$ and a healthy margin on both, not merely the right
sign. Four candidate mechanisms were fixed in advance, before any model was
probed, each with $5$ lexical instances: indirect object identification (IOI;
swapping which of two named entities gave an item to the other,
\cite{wang2023}), a sentiment cue (loved/hated $\to$ positive/negative
adjective), temporal order (who left first), and a magnitude comparator
(larger/smaller number) kept as a designed-rather-than-found fallback. Only
IOI was clean on all $5$ instances at margin $>1.0$ logit (e.g.\ Mary/John:
$g_A=8.20$, $g_B=-6.37$); sentiment was clean on $3/5$, temporal order on
$2/5$, and the magnitude comparator on $0/5$ (rejected, as flagged in advance).
IOI is retained for everything below; the other three families are reported
here only because they were pre-registered, not because they matter further.

\paragraph{Method.} On the $5$ IOI instances, we run a greedy
activation-patching search (Wang et al.'s original method for this task,
\cite{wang2023}; the same causal-tracing operation this paper uses to reason
about carriers throughout) over $364$ candidate sites (the output of every
attention head ($28\times12$) and every MLP block ($28$)), using this
paper's own \texttt{greedy\_patch\_search!}\allowbreak/\texttt{backward\_prune!}
implementation with the recovery metric restricted to $s(x)$ at the final
token rather than the full vocabulary, then prunes the found set for
redundant sites. Search depth was capped at $6$ sites per instance, fixed
before any instance was run.

\paragraph{A shared circuit.} Figure~\ref{fig:qwen-freq} shows how often each
site appears across the $5$ pruned circuits. One head, layer $25$/head $9$,
appears in all $5$; one more, layer $19$/head $6$, in $4$ of $5$; the
remaining $16$ sites appear in at most $3$ circuits each, mostly in exactly
one: a small shared core plus instance-specific idiosyncratic support,
the same redundant-code pattern reported elsewhere in superposed and
self-repairing networks \cite{elhage2022,mcgrath2023} and already discussed
in Section~\ref{sec:related} above. On every one of the $5$ instances,
\texttt{backward\_prune!} removed nothing: all $6$ sites found by the greedy
search were still individually necessary at its tolerance, so
Figure~\ref{fig:qwen-freq} reports circuits with no known-superfluous site
left in them. Figure~\ref{fig:qwen-traj} shows the recovery trajectory
itself: $4$ of $5$ instances reach recovery $\ge0.96$ within the $6$-site
budget; Lucy/Sam plateaus at $0.673$ and is the one instance where the
search did not find enough of the mechanism within budget.

\begin{figure}[htbp]
\centering
\includegraphics[width=.92\linewidth]{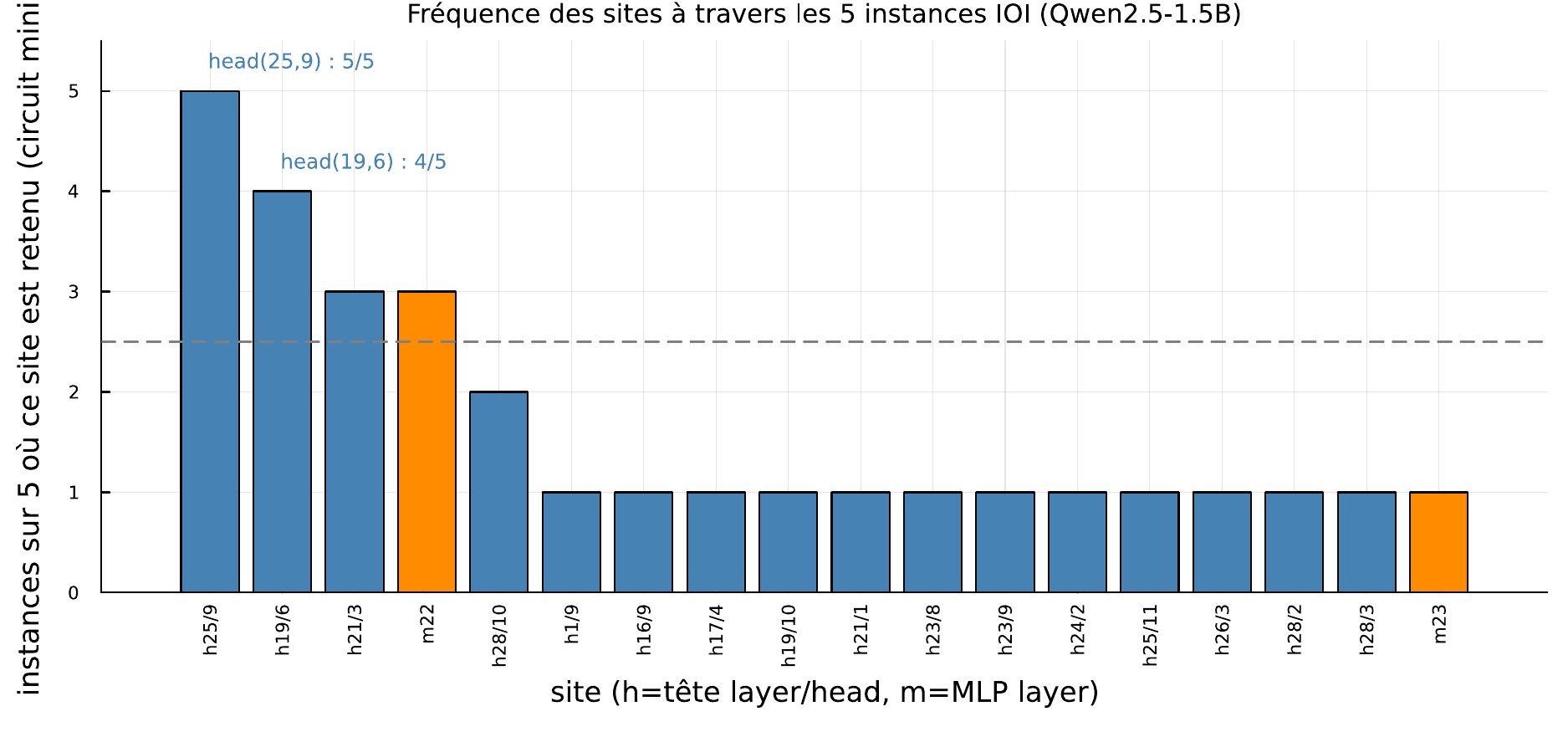}
\caption{How many of the $5$ pruned IOI circuits on Qwen2.5-1.5B-Instruct
contain each site (h=attention head layer/head, m=MLP layer). One head
(layer $25$/head $9$) is shared by all $5$; one more (layer $19$/head $6$) by
$4$; the other $16$ sites are idiosyncratic to $1$--$3$ instances.}
\label{fig:qwen-freq}
\end{figure}

\begin{figure}[htbp]
\centering
\includegraphics[width=.8\linewidth]{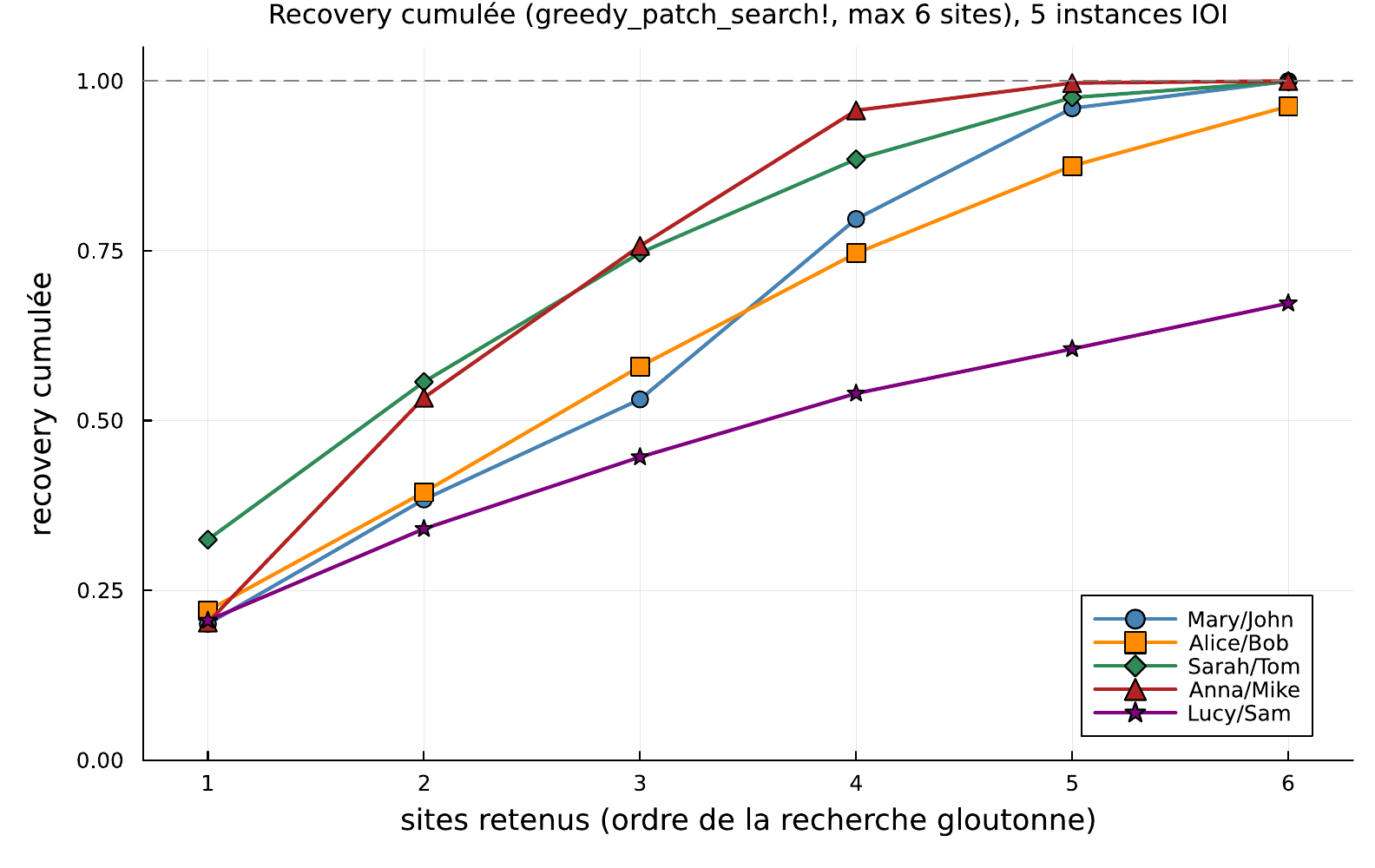}
\caption{Cumulative recovery of the greedy search, one curve per IOI
instance. Four instances converge to recovery $\ge0.96$ by the $6$-site
budget; Lucy/Sam plateaus at $0.673$, the weakest of the five.}
\label{fig:qwen-traj}
\end{figure}

\paragraph{Three probes, and what operation each one actually performs.}
On the pruned circuit of each instance we measure three quantities directly
analogous to this paper's central claims, and we state precisely which
operation realizes each one before reporting numbers, since the three are
not the same kind of intervention.

\emph{Collapse.} With every site in the pruned circuit forced to $0$
(not the frozen-activation prediction used elsewhere in this paper, but a
full recomputation of everything downstream, since \texttt{patch\_node!}
invalidates and re-derives every consumer), we compare the selector on
$x_A$ and on $x_B$ separately. Forcing a single component's activation to a
fixed value and letting the rest of the network recompute is, node for node,
what Fact~\ref{prop:patchedit} identifies with editing that
component's own weights by the corresponding projector; here the projector
is the identity on the component's entire output (dimension $d_{\mathrm
h}=128$ for a head, so still $\ll d=1536$ in the sense of
Assumption~\ref{ass:lowrank}; dimension $d=1536$, the whole residual, for an
MLP, so not low-rank in that sense at all, though no result in this paper
uses $\dim(\mathcal C)\ll d$ quantitatively). This is a coarser carrier than
the single donor--receiver direction estimated from data in the companion
paper's own experiments, but Remark~\ref{rem:bridge} and
Fact~\ref{prop:patchedit} are stated for an arbitrary projector, not
only a low-rank one, so the single-site case is a literal instance of the
deletion \eqref{eq:ablation}, not merely an activation-level analogue of it.
Jointly forcing several sites to $0$ is not separately covered by
Fact~\ref{prop:patchedit} (stated for one component), though the same
argument extends to it: a component whose output is pinned to a fixed value
regardless of its own input is, by construction, unaffected by what any
other pinned component does upstream of it, so the jointly-recomputed result
again coincides node for node with jointly editing every corresponding
weight matrix to zero. We did not verify this joint extension independently
of that argument, and report it on that basis.

\emph{Dissociation.} We compare this same joint zero-ablation, measured on
$x_A$'s own forward pass, against the ordinary activation-patching recovery
already used to find the circuit (corrupted run $x_B$ patched toward clean
$x_A$ at the same sites). Fact~\ref{thm:dissociation} is precisely the
claim that patching one input's activations into another and ablating a
component within a single input's own forward pass are, in general,
different operations; here they manifestly are different operations, one
substituting a value computed on a different token sequence, the other
forcing a component's own contribution to zero on the same sequence, and we
report the gap between their effect sizes as exactly that: a dissociation
between two operations, not an approximation error between two estimates of
the same one.

\emph{Interaction.} On the $3$ instances whose pruned circuit contains both
a head and an MLP, we take the first head and first MLP found (an arbitrary
tie-break among the pruned sites, not a choice aimed at any particular
layer pair) and measure the true, jointly-recomputed effect of ablating both
against the sum of their two individually-measured effects: exactly the
frozen-activation-versus-true-network gap this paper calls the
\emph{interaction magnitude} above, at the selector level. Two of the three
pairs found this way have the MLP carrier in an \emph{earlier} layer than
the head carrier (mlp layer $22$/$23$ before head layer $25$), so the
head's ablation cannot be reaching that MLP's own computation at all; the
third (Sarah/Tom: head layer $19$ before mlp layer $22$) has the right order
but the two are three transformer blocks apart, not separated by the single
RMSNorm of a shared block that Fact~\ref{thm:interaction} formalizes. None
of the three pairs instantiates that fact's same-block head-then-MLP
configuration; what nonzero interaction they show instead bears on this
paper's own open cross-multi-layer generalization of
Section~\ref{sec:multilayer} (Proposition~\ref{prop:multilayer},
Remark~\ref{rem:multilayer-open}), not on the companion paper's theorem
itself. More precisely, writing $\Delta_H:=s(x)-s_H(x)$ and
$\Delta_M:=s(x)-s_M(x)$ for the two single-site zero-ablations and
$\Delta_{HM}$ for their joint ablation, this ``interaction'' column is
exactly $\Delta_{HM}-\Delta_H-\Delta_M=-R_\times(x)$ of
Section~\ref{sec:crossterm} with $S_{l_1}=\{H\}$, $S_{l_2}=\{M\}$: these three
numbers are not merely analogous to $R_\times(x)$, they are real,
non-negligible measured values of it on a genuine $1.5$B-parameter pretrained
network, at the two specific (non-same-block) layer pairs each instance
happens to provide. Remark~\ref{rem:signlaw} above uses these same three
numbers in its search for a sign rule and finds none.

\begin{table}[htbp]
\centering
\small
\begin{tabular}{lccccc}
\toprule
Instance & Recovery & Collapse ratio & Dissociation gap & Interaction (rel.) \\
\midrule
Mary/John  & $1.000$ & $0.245$ & $0.678$ & $0.124$ \\
Alice/Bob  & $0.963$ & $0.224$ & $0.489$ & $0.327$ \\
Sarah/Tom  & $0.999$ & $0.168$ & $0.546$ & $0.066$ \\
Anna/Mike  & $1.000$ & $0.761$ & $0.852$ & n/a (0 MLP in circuit) \\
Lucy/Sam   & $0.673$ & $0.396$ & $0.224$ & n/a (0 MLP in circuit) \\
\bottomrule
\end{tabular}
\caption{The three probes on the $5$ IOI instances. Recovery is
\texttt{greedy\_patch\_search!}'s final joint activation-patching recovery
(cross-input, no weight-edit reading). Collapse ratio is
$|s_{\text{ablated}}(x_A)-s_{\text{ablated}}(x_B)|/|s(x_A)-s(x_B)|$ after
jointly zero-ablating the pruned circuit on each input separately ($0$ =
full collapse onto a shared value, $1$ = no collapse). Dissociation gap is
the absolute difference between the patching-recovery and the same-circuit
ablation effect on $x_A$ alone. Interaction (rel.) is $|\Delta_{HM}-\Delta_H-\Delta_M|/(|\Delta_H|+|\Delta_M|)$
for the first head/MLP pair in the pruned circuit, where applicable.}
\label{tab:qwen}
\end{table}

\paragraph{Honest verdict.} Collapse is clear-to-partial on $4$ of $5$
instances (ratio $0.168$--$0.396$) and absent on the fifth: Anna/Mike
retains a ratio of $0.761$, i.e.\ ablating its own $6$-site circuit barely
narrows the gap between the two branches, the same kind of instance-specific
exception the companion paper reports elsewhere on its own marker-task
seeds rather than rounds past. Dissociation is never small: the gap between
the patching effect and the ablation effect is at least $0.224$ and as large
as $0.852$ on all $5$ instances, i.e.\ these two interventions never
coincide on this circuit, consistent with Fact~\ref{thm:dissociation}'s
claim that they are different operations in general, though absent a
theorem-derived target value this is evidence of a real, always-present gap
rather than a quantitative confirmation. Interaction is measurable on only
$3$ of $5$ instances (the other two circuits contain no MLP at all, so the
probe does not apply, not that it was skipped), and among those three it
ranges from small ($0.066$, Sarah/Tom) to non-negligible ($0.327$,
Alice/Bob), and, as argued above, none of the three pairs sits in the
same-block configuration Fact~\ref{thm:interaction} was proven for, so this
is at most suggestive of this paper's own open multilayer case, not a test
of the companion theorem itself. We report all three probes as a genuinely
mixed result on a real model neither trained nor selected for either
paper's claims: a real, cross-instance shared carrier does emerge
(Figure~\ref{fig:qwen-freq}), collapse and dissociation both occur on most
instances and both fail to occur cleanly on at least one, and the
interaction probe (the one closest, in its literal operation, to this
paper's own weight-space machinery) lands mostly outside the exact scope any
theorem here was proven for.

\section{Discussion}
\label{sec:discussion}

\paragraph{What is established.} Proposition~\ref{prop:multilayer} gives an
exact decomposition of the multi-layer interaction into same-block terms
plus one remainder, for an arbitrary ablated subset spanning any number of
layers, with no approximation anywhere in the identity itself. Two of the
three quantities appearing in it are fully pinned down: a same-block term is
exactly zero when that layer's own carriers are MLP-only, and exactly the
companion paper's bounded interaction term when the layer feeds the readout
directly. Proposition~\ref{prop:crossterm} then isolates the third quantity,
the cross-layer remainder $R_\times$, exactly, for two touched layers, as a
double integral of a mixed second derivative; this is a genuine reduction of
an unaddressed quantity to a named, exact object, not a bound. What is
established about bounding it is partial by construction: as
Remark~\ref{rem:multilayer-scope} states, $R_\times$ is not itself bounded
in this paper, and reducing that question to a chained, multi-block Jacobian
bound is as far as we take it. Proposition~\ref{prop:attnjacobian} supplies
the one closed-form ingredient this paper identified as missing for that
chain, a local attention Jacobian bound verified pointwise, without a single
violation, against Qwen2.5-1.5B-Instruct's real weights; it does not yet
chain across many layers, and we report that limitation as open rather than
implicitly resolved. Proposition~\ref{prop:curvature} supplies, separately,
the closed-form curvature constant the companion paper's own second-order
interaction bound needs and does not itself exhibit, discharging that
hypothesis on trained weights rather than leaving it assumed.

\paragraph{What remains open.} $R_\times(x)$'s own closed-form bound is the
central open item this paper leaves: Section~\ref{sec:crossterm} names the
missing ingredient precisely (chaining the per-layer attention and
normalization--MLP Jacobian bounds across every block between two touched
layers) without closing it, and Remark~\ref{rem:signlaw} adds a structural
argument, not merely an empirical failure, for why no simple sign rule for
$R_\times(x)$ should be expected to exist at all. The attention Jacobian
bound of Proposition~\ref{prop:attnjacobian} is itself verified only at a
single layer and a single perturbed token; whether the per-layer factor
compounds into something numerically vacuous after a handful of further
blocks, given the four-to-six-order-of-magnitude looseness already present
in the weight-only version of the bound at one layer, is an open empirical
question this paper does not resolve. Finally, the Qwen2.5-1.5B-Instruct
result is a genuinely mixed one on a single pretrained model and a single
mechanism (indirect object identification), not the kind of systematic
replication across models and tasks that would let us generalize past it:
one instance shows no real collapse (Anna/Mike, ratio $0.761$), the
interaction probe applies to only three of the five circuits, and none of
the three applicable instances sits in the same-block configuration the
companion paper's own theorem was proven for, so what they measure is at
most suggestive evidence about this paper's still-open multi-layer case,
not a test of a closed result.

\section{Conclusion}
\label{sec:conclusion}

A single residual block's interaction term, exact and provably second-order
bounded in the companion paper, says nothing on its own about a network with
more than one block. This paper closes part of that gap and states honestly
what remains: an exact decomposition attributes the multi-layer interaction
to same-block terms, each of them fully accounted for, plus a single
cross-layer remainder that is itself isolated exactly rather than bounded.
Closing that remainder to a usable numerical bound needed one further
ingredient, a Jacobian bound for the attention sub-block, which did not
exist in closed form before this paper and which we derive and verify,
without a single violation, against a real model's real weights; chaining
that bound across many layers is the piece we did not close, and we say so
rather than let the single-layer verification imply more than it shows.

The same real model, probed for a mechanism nobody designed into it, gives
this paper's most direct evidence that the companion paper's qualitative
predictions are not an artifact of training a small transformer to have
them. That evidence is mixed by construction: a shared carrier across
instances, collapse and dissociation present on most of five tested
instances and genuinely absent on at least one, and a nonzero interaction
measurable on three of them, at layer pairs the companion theorem was never
proven to cover. We take the mixed result as the more informative one. A
theory that only ever gets confirmed on the networks built to confirm it is
not yet a theory of anything else, and the honest next question this raises,
whether the same pattern holds across further models, further mechanisms,
and enough instances to test the sign question we could not settle here, is
sharper for having been asked on a model that was never asked to answer it.

\appendix

\section{Experimental Details and Reproducibility}
\label{app:repro}

This appendix gives what is needed to reproduce the same-block-versus-
cross-layer measurement of Section~\ref{sec:experiments}, which reuses the
companion paper's own marker-task checkpoints; the Qwen2.5-1.5B-Instruct
subsection of Section~\ref{sec:experiments} is otherwise self-describing.

\paragraph{Task.} Key and value tokens are drawn from a vocabulary of
$V=8$ symbols, with two additional marker tokens $m_A,m_B$, so the model's
vocabulary has $10$ entries. Each sequence presents $3$ key--value pairs in
random order ($6$ tokens) followed by a marker and a displayed token, giving
sequence length $8$; the target is the value of the true key, read at the
final position.

\paragraph{Architecture and training.} Each instance is a $4$-layer decoder
transformer in the Llama style: model width $d=64$, $4$ attention heads
(head dimension $16$), SwiGLU MLP with hidden width $128$, pre-normalization
by RMSNorm with $\varepsilon=10^{-6}$, learned positional embeddings, weights
in float32. There is no final normalization: the unembedding is applied
directly to the residual stream, so the logit contrast is an exactly affine
functional of $F(x)$ and Assumption~\ref{ass:readout} holds exactly, rather
than approximately, for these networks. Four instances use initialization
seed $S\in\{11,22,33,44\}$; the fifth (``inst.\ 2'') was trained earlier
under the same configuration with a different seed. No model was retrained
for the measurement reported here: it reads the same five checkpoints the
companion paper uses for its own marker-task experiments.

\paragraph{Carrier selection and ablation subspaces.} The candidate set is
every attention head and every MLP output, $4\times(4+1)=20$ sites per
instance. A site is a carrier when its mean donor-to-receiver patch recovery
$r\ge0.25$ over $8$ matched clean pairs. The companion paper's three
canonical configurations per instance are D1 (the highest-$r$ carrier in the
shallowest carrier-bearing layer, alone), DJ (all carriers in that layer),
and DJA (all carriers); together with every other non-empty subset of each
instance's carrier set, this gives $39$ configurations across the five
instances, of which $18$ touch two or more layers and are the ones this
paper's own measurement uses. For each carrier, ablation subspaces are the
leading left singular vectors of donor--receiver activation differences
collected over $32$ matched pairs, taking the smallest $k$ reaching $90\%$ of
the squared-singular-value energy, capped at $k\le4$ for a head and $k\le8$
for an MLP. The weight edit is
$W_O[:,\mathcal H]\leftarrow W_O[:,\mathcal H](I-UU^\top)$ for a head with
column block $\mathcal H$, and $W_2\leftarrow(I-UU^\top)W_2$ for an MLP:
these are the concrete instances of \eqref{eq:ablation} referred to in
Remark~\ref{rem:bridge}, with the directions estimated from data.

\paragraph{Probe seeds.} The same-block/cross-layer decomposition uses probe
seeds $20260803/20260804$ ($80$ matched pairs per configuration, restricted
to the $18$ configurations spanning two or more layers), disjoint from every
other seed used anywhere in the companion paper.

\paragraph{Code and data availability.} All experiments run against the
research repository accompanying this submission; no external data is used,
and Qwen2.5-1.5B-Instruct's weights are loaded locally rather than
downloaded during any run reported here. The relevant files are:
\begin{description}
\item[\texttt{notebook/\allowbreak verify\_crosslayer\_decomposition.jl}]\hfill\break
Proposition~\ref{prop:multilayer}, the same-block/cross-layer split on the
$18$ multi-layer marker-task configurations.
\item[\texttt{notebook/\allowbreak verify\_curvature\_bound.jl}]\hfill\break
Proposition~\ref{prop:curvature}, the finite-difference refutation test of
Remark~\ref{rem:curvature-check}.
\item[\texttt{notebook/\allowbreak verify\_attention\_jacobian\_bound.jl}]\hfill\break
Proposition~\ref{prop:attnjacobian}, the finite-difference refutation test
of Remark~\ref{rem:attnjacobian-check} on Qwen2.5-1.5B-Instruct's real
weights.
\item[\texttt{notebook/\allowbreak code4\_real\_llm\_step1\_matched\_pair\_search.jl}]\hfill\break
the search over the four candidate mechanisms and their lexical instances
that identifies IOI as the clean matched-pair family on Qwen2.5-1.5B-Instruct.
\item[\texttt{notebook/\allowbreak code4\_step2\_ioi\_patching.jl}]\hfill\break
the greedy activation-patching circuit search
(\texttt{greedy\_patch\_search!}/\texttt{backward\_prune!}) and the three
probes (collapse, dissociation, interaction) of Table~\ref{tab:qwen}.
\item[\texttt{notebook/\allowbreak generate\_code4\_step2\_figures.jl}]\hfill\break
Figures~\ref{fig:qwen-freq} and~\ref{fig:qwen-traj}, generated directly from
the JSON results of the preceding script.
\end{description}
The companion paper's own scripts train and select the carriers for the
five marker-task checkpoints this paper's same-block/cross-layer measurement
reuses. These checkpoints are stored under \texttt{notebook/marker\_ckpt/}:
\begin{description}
\item[\texttt{notebook/\allowbreak marker\_task\_experiment.jl}]\hfill\break
task, architecture, and training.
\item[\texttt{notebook/\allowbreak marker\_seed\_matrix.jl}]\hfill\break
per-seed training and carrier sweep.
\end{description}
Each script above writes a JSON results file that the corresponding figure
script reads directly, so no number in a figure is transcribed by hand.

\end{document}